\documentclass[11pt]{article}
\usepackage{geometry}
\usepackage[colorlinks=true, linkcolor=blue, citecolor=red, urlcolor=blue]{hyperref}
\usepackage{natbib}
\usepackage{graphicx}
\usepackage{amsthm}
\usepackage{amssymb}
\usepackage{amsmath}
\usepackage{enumerate}
\numberwithin{equation}{section}
\usepackage{longtable}
\newtheorem{definition}{Definition}

\newtheorem{proposition}{Proposition}

\title{DCS: A Unified Conditional Sensitivity Framework for\\ Cross-Modal Copyright Infringement Detection}
\author{
	\textbf{Xiafeng Man}\\
	Fudan University\\
	\texttt{xfmanacad@outlook.com}\\
}
	
\begin{document}
\maketitle

\begin{abstract}
	Currently, most foundation models can reproduce or strongly depend on copyrighted training content, but output similarity alone is insufficient for infringement detection, because similar outputs may also arise from public-domain concepts, common stylistic conventions, or ordinary statistical generalization. 
	
	In this paper, we develops a unified post-hoc detection framework that treats copyright infringement evidence as a counterfactual conditional distribution shift: a protected target is suspicious when the model's behavior under aligned conditions would change measurably if that target were included in, or removed from, the training process.
	
	We formalize this view through conditional differential privacy and introduce \textbf{D}ual-Branch \textbf{C}onditional \textbf{S}ensitivity (DCS), an operational statistic that measures the observable gap between two locally perturbed model states. 
	Specifically, the proposed DCS framework creates a learning branch and an unlearning branch around the deployed model, connects their displacement to the unavailable counterfactual retraining effect through influence-function analysis, and bounds the observable sensitivity by the counterfactual privacy-budget surrogate, local curvature, training-set scale, and perturbation step size. To distinguish target-specific memorization from generic fine-tuning instability, we further define a calibrated detection statistic that subtracts the sensitivity measured under orthogonal conditions.
	
	The DCS framework is instantiated for ridge-regularized linear regression, conditional diffusion models, autoregressive language models, and multimodal models. These instantiations show how the same principle can be evaluated through prediction gaps, image-embedding divergence, token-distribution or entropy shifts, and cross-modal representation changes. The resulting formulation provides an interpretable bridge between privacy leakage, memorization, and copyright detection across modalities, while making explicit the broader auditing assumptions under which post-hoc infringement detection is statistically meaningful.
	
\end{abstract}

\newpage
\tableofcontents

\newpage
\section*{Notation}
\addcontentsline{toc}{section}{Notation}

We use calligraphic letters such as $\mathcal{X}$ for spaces or abstract domains, bold symbols such as $\mathbf{p}$ for vectors or matrices, hats such as $\hat{\theta}$ for empirical estimators obtained from an optimization problem.

\begin{center}
	\begin{longtable}{p{0.25\textwidth}p{0.70\textwidth}}
		\hline
		\textbf{Symbol} & \textbf{Description} \\
		\hline
		\multicolumn{2}{l}{\textit{Math}}\\
		\hline
		$\mathbf{H}$ & Hessian matrix\\
		$\lambda$ & Eigenvalue \\
		$\lambda_{\max}(\mathbf{H})$ & Largest eigenvalue or local maximum curvature of the Hessian \\
		$\|\cdot\|_{\mathcal{F}}$ & Frobenius norm \\
		\hline
		\multicolumn{2}{l}{\textit{Models, Datasets, and Samples}} \\
		\hline
		$M$ & Randomized model or mechanism \\
		$\mathcal{X}$, $\mathcal{Y}$ & Model input and output space \\
		$S$ & Measurable output event \\
		$\mathbf{p} \in \mathcal{P}$ & Condition content and its space (e.g., prompts, queries or multimodal contents) \\
		$\mathcal{P}_c$, $\mathcal{P}_{\perp}$ & Target-aligned conditioning set and orthogonal conditioning set \\
		$\theta_D \in \Theta$ & Model parameter trained on dataset $D$ and its space \\
		$D$ (e.g., $D^+$, $D^-$) & Training dataset or locally perturbed model state shorthand \\
		$D'$ & Neighboring dataset differing from $D$ in one target sample or concept \\
		$z_i=(x_i,y_i)$ & A training data point with input $x_i$ and output $y_i$ \\
		$U(\cdot)$ & Semantic neighborhood \\
		$\Phi$ & Query function \\
		\hline
		\multicolumn{2}{l}{\textit{Losses, Influence, and Sensitivity}} \\
		\hline
		$\mathcal{L}(\cdot)$ & Loss function \\
		$R(\theta)$ & Empirical risk \\
		$\mathcal{I}_{\mathrm{up}}$ & Influence of upweighting a training point\\
		$\eta$ & Fine-tuning step size \\
		$I$ & Fine-tuning branch indicator \\
		$GS(M)$, $LS(M,D)$ & Global and local sensitivity of a query or mechanism \\
		$CS(z_c,\mathbf{p})$ & Conditional sensitivity with target concepts and conditions \\
		$S_c(z_c)$, $S_{\perp}(z_c)$ & Aligned sensitivity and orthogonal background sensitivity \\
		$CDS(z_c)$, $\widetilde{CDS}(z_c)$ & Calibrated detection statistic and normalized calibrated statistic \\
		$\Delta\theta$ & Parameter displacement \\
		\hline
		\multicolumn{2}{l}{\textit{Probability and Privacy}} \\
		\hline
		$\Pr[\cdot]$ & Probability of an event \\
		$q(\cdot)$ & Density or probability mass function \\
		$\epsilon$, $\delta$ & Differential privacy budget and its failure probability \\
		$\widehat{\epsilon}(z_c,\mathbf{p})$ & Observation-space surrogate of the conditional privacy budget \\
		$f(z_c)$ & Confidence score \\
		\hline
	\end{longtable}
\end{center}

\newpage
\section{Introduction}
\label{sec: introduction}

Due to the effect of scaling law, most large artificial intelligence models are increasingly trained on heterogeneous corpora containing text, images, code, audio, video, and multimodal associations collected at web scale. Their practical value comes from the ability to generalize from this data, but the same training process can also create model behaviors that reproduce, reconstruct, or strongly depend on protected expressive content. 

Prior studies~\cite{10.1145/3357713.3384290, carlini2021extracting, carlini2023extracting} on memorization and extraction have shown that rare or distinctive samples can become disproportionately influential in learned models, especially in long-tail regions of the data distribution. Diffusion models raise a particularly visible version of this concern, because generated images may replicate training examples or protected visual styles in ways that resemble digital forgery~\cite{somepalli2023diffusion, zhou2023copyscopemodellevelcopyrightinfringement, wang2023diagnosis}; related memorization risks are also emerging in video diffusion models~\cite{chen2025investigatingmemorizationvideodiffusion}.

Copyright auditing~\footnote{In this paper, \emph{detection} denotes the technical task of estimating the evidence of copyright infringement, whereas \emph{auditing} denotes the broader investigative setting in which such detection is deployed.} therefore requires more than a raw comparison between a generated output and a reference work. Two outputs may be similar because the target work was memorized, but it may also results from generic concept expression or similar region of the training distribution. 
Recent benchmarks and evaluation studies on copyright show that infringement detection and mitigation mostly depend on target concepts and prompts~\cite{ma2024datasetbenchmarkcopyrightinfringement, wang2024evaluatingmitigatingipinfringement, xu2025largevisionlanguagemodelsdetect}. However, even when ordinary prompts do not directly extract a close copy, a model may also contain information about a protected work.
The central difficulty is that we must demonstrate that the target work has a measurable effect on the model's output behavior, not merely visual or textual similarity. 

Based on the analysis above and our previous study~\cite{Man_Wei_Chen_2026}, we formulate copyright infringement detection as a \textbf{counterfactual distributional problem}. As developed in Section~\ref{sec: problem-formulation}, we will compare the behavior of a model trained with or without target concepts. This framing separates explicit infringement from ordinary generalization, and points out the auditing constraints faced by copyright holders, model providers, and third-party auditors. In particular, a meaningful auditing procedure must operate under a \textbf{copyright trilemma}: the target must be sufficiently distinctive, the model must expose observable behavior, and the model may also be protected by mechanisms that suppress direct extraction~\cite{bourtoule2021machine, alberti2025dataunlearningdiffusionmodels, gandikota2023erasingconceptsdiffusionmodels, kumari2023ablatingconceptstexttoimagediffusion}.
Therefore, we define a clear operational boundary and necessary assumptions for the detection work.

To give this view a mathematical foundation, Section~\ref{sec: dp-view} reinterprets copyright memorization through differential privacy. Classical differential privacy~\cite{10.1007/978-3-540-79228-4_1, 10.1561/0400000042} provides statistical guarantees against the information an adversary can infer through the output of a randomized algorithm.
In the region of copyright infringement detection, we adapt this idea to conditional generation by defining \textbf{conditional privacy, conditional publicity, and (non-) infringement criteria} under prompts, queries, or multimodal conditions. In this view, the inclusion of an infringed target should produce large conditional privacy violation, while a non-infringed or irrelevant target should produce approximate invariance.

The main technical contribution appears in Section~\ref{sec: unified-sensitivity}. 
Directly training the counterfactual model from scratch is usually infeasible, especially when the original training data and training pipeline are unavailable. \textbf{D}ual-Branch \textbf{C}onditional \textbf{S}ensitivity (\textbf{DCS}) therefore employs a perturbation strategy. Specifically, we construct two local branches from the original parameter state: a learning branch that takes a small optimization step toward the target, and an unlearning branch that takes the opposite step. The divergence between the two branches under aligned conditions defines the conditional sensitivity score. Through the coupling of  influence function~\cite{3305381.3305576, park2023trakattributingmodelbehavior}, the local dual-branch gap acts as an operational surrogate for the unavailable counterfactual drift induced by including or excluding the target.

However, fine-tuning any large model can introduce global instability unrelated to the target--the global parameter shift. Fine-tuning models on the target sample will inevitably influence the generation of unrelated samples.
To address this problem, we define a \textbf{calibrated detection statistic} in Section~\ref{sec: unified-5}. It compares sensitivity under aligned conditions $\mathcal{P}_c$ with the one under orthogonal conditions $\mathcal{P}_{\perp}$. The aligned component estimates the model's response in the semantic neighborhood of the protected work, while the orthogonal component estimates background drift caused by the perturbation procedure itself. Their difference yields a more robust infringement signal.

Finally, in Section~\ref{sec: modality-specific}, we instantiate the framework across several model classes:
  In ridge-regularized linear regression, the sensitivity and privacy-budget surrogate admit a closed-form relationship controlled by residuals and Hessian geometry.
 In conditional diffusion models, sensitivity is measured as the representation-space gap (e.g., CLIP image embeddings~\cite{radford2021learningtransferablevisualmodels}) between learning and unlearning samplers under aligned prompts, following the empirical motivation of our prior work~\cite{Man_Wei_Chen_2026}.
 In casual language models, the sensitivity signal appears as a divergence in next-token distributions or entropy along target continuations, consistent with the transformer-based large language model paradigm~\cite{NIPS2017_3f5ee243, zhao2026surveylargelanguagemodels}.
 In multimodal models, we extend the framework to the cross-modal representations and attention-based associations, allowing the framework to capture protected bindings between names, prompts, images, styles, and other modalities.

\textbf{In summary, our work makes the following contributions:}
	
\begin{itemize}
	\item To our knowledge, we are the first to establish a unified theoretical foundation for the copyright detection view of differential privacy;
	\item We propose a theoretically grounded theory for the post-hoc detection of copyright infringement. Specifically, we first formulate the evidence of copyright infringement as a conditional distribution shift; then we propose that a meaningful auditing procedure must operate under a copyright trilemma and that detection work has operational boundaries.
	\item From the perspective of mathematics and statistics, we introduce dual-branch conditional sensitivity (DCS) statistic, a unified conditional sensitivity measure for memorization and publicity violations, based on dual-branch perturbations: learning and
	unlearning fine-tuning branches. We also define an orthogonally calibrated detection score that greatly discounts generic fine-tuning drift;
	\item We instantiate our framework in different modalities, such as linear model, diffusion model, language model, and multimodal model.
	
\end{itemize}

\newpage
\section{Problem Formulation}
\label{sec: problem-formulation}

\subsection{Counterfactual Distribution Shift}

Defining the boundary between copyright infringement and legitimate model generalization remains a challenge in the discipline of artificial intelligence and law regulations, because output similarity alone is not a causal notion. 
When a model generates text or image output, it is inherently difficult to evaluate infringement based solely on raw, unfeatured outputs. 
It may arise from public-domain concepts, common stylistic conventions, or independent statistical convergence, rather than model memorization or direct replication.
Therefore, a static comparison between a generated output and a copyrighted reference is insufficient for determining whether the protected work had a concrete effect on the model.

Therefore, we formulate the detection of copyright infringement as \emph{a counterfactual distributional problem}. Let $x_c$ denote a copyrighted sample or concept, and let $M_D$ denote a model trained on dataset $D$. The relevant question is not merely whether some output of $M_D$ resembles $x_c$, but whether the conditional output distribution of the model would change if $x_c$ were removed from, or added to, the training process. In this view, infringement evidence corresponds to a measurable behavioral dependence of the model on the target copyrighted data.

The proposed formulation consists of two components:

\begin{itemize}
	\item \textbf{Counterfactual Analysis}: The model is evaluated across two states: one in which the target copyrighted data or concept is included in the training set, such as $D_{\mathrm{train}} \cup \{x_c\}$, and one in which it is strictly omitted, such as $D_{\mathrm{train}} \setminus \{x_c\}$.
	\item \textbf{Shift Quantification}: The behavioral difference between these states is measured through controlled conditioning spaces, such as prompts, queries, or multimodal contexts, by estimating the statistical distance between the output distributions.
\end{itemize}

With these formulations, a large counterfactual shift implies that the target data greatly influences model's behavior, therefore suggesting potential infringement. Conversely, approximate invariance suggests only generalization or no infringement.

\subsection{Copyright Trilemma}

\paragraph{Auditing Constraints}
Training datasets are often proprietary and not publicly disclosed due to legal, commercial, or privacy constraints. In some cases, even the model developers lack full visibility into the data sources and do not retain a complete record of all data sources, making direct access or full auditing impossible. 

\paragraph{Copyright Trilemma}
In such cases, copyright auditing is constrained by a three-way tension among copyright holders, auditors, and model providers.
Effective auditing requires the simultaneous satisfaction of these three parts, but in practice, three objectives cannot all be guaranteed, that is, any model can only satisfy at most two of the following objectives, as shown in Figure~\ref{fig: trilemma}:

\begin{itemize}	
	\item \textbf{High distinctiveness}: The audited target should contain protectable and non-generic expressive copyright characteristics, rather than merely reflecting public knowledge or highly homogeneous patterns.
	
	\item \textbf{Observable behavior}: The model's outputs should reveal enough information for auditors to estimate the internal distributional variations induced by the target data.
		
	\item \textbf{Robust generalization}: The model itself should generalize without exposing memorized training data. It should also protect copyright of training datasets by alignment, unlearning, concept erasure, or privacy-preserving mechanisms, therefore ensuring legitimate outputs.
\end{itemize}

\begin{figure}
	\centering
	\includegraphics[width=0.9\textwidth]{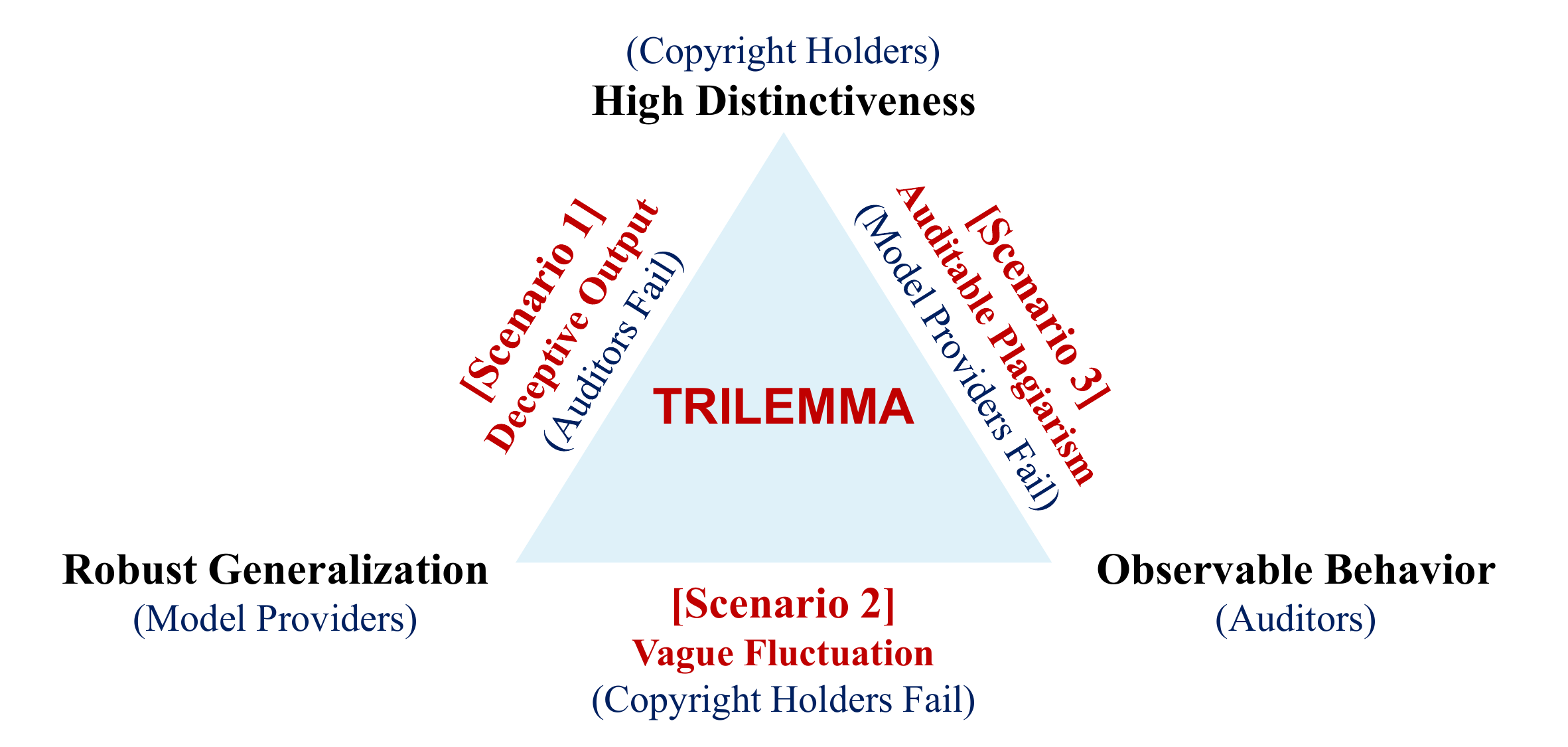}
	\caption{Trilemma of Copyright Auditing}
	\label{fig: trilemma}
\end{figure}

\paragraph{Copyright Trilemma Scenarios} We list three scenarios in copyright trilemma:

\begin{itemize}
	\item \textbf{Scenario 1: Deceptive Output.} In this scenario, a distinctive sample is included in training corpus, but the model employs robust post-hoc alignment, machine unlearning, differential privacy or other data protection mechanisms during training. These mechanisms may smooth observable output behaviors, thereby creating a hidden infringement state that standard output-based detection methods fail to capture.
	\item \textbf{Scenario 2: Vague Fluctuation.} Even for a secure and well-aligned model, auditors may possess sensitive tools capable of tracking subtle statistical fluctuations in the output space. 
	However, if the target consists of highly homogeneous or generic concepts, such as ordinary natural landscapes, it may lack sufficiently distinctive expression~\cite{ghorbani2019datashapleyequitablevaluation}. 
	In this case, any observed similarity can be plausibly attributed to legitimate generalization rather than copyright infringement.
	\item \textbf{Scenario 3: Auditable Plagiarism.} When a model is trained on a highly distinctive copyrighted work by copyright holders and operates without stringent privacy protection, its conditional output distribution may shift dramatically toward the long-tail memorized behavior~\cite{10.1145/3357713.3384290}. 
	This setting provides strong evidence for auditing, but it also indicates a failure of model privacy and security. 
	Unmasked memorization leaves extractable output behavior, making the model susceptible to membership inference and data extraction attacks~\cite{carlini2021extracting, carlini2023extracting, somepalli2023diffusion}. 
\end{itemize}

\subsection{Problem Assumptions}

The trilemma above implies that distributional shift detection is meaningful only under explicit operational assumptions. 
Our framework is not intended to declare infringement from arbitrary output resemblance. Instead, it tests whether a sufficiently distinctive target induces a statistically observable change in model behavior under controlled counterfactual perturbations.

First, the target data must display high \emph{semantic, stylistic, or structural salience}. Highly homogeneous or generic samples are difficult to isolate from the background training distribution because the marginal contribution of a single redundant data point may yield near invariance under counterfactual analysis.

Second, the model must exhibit \emph{observable behavior}. If specific algorithm fully masks the effect of the target data, then the relevant distributional change may be invisible to the parameter or output interrogation.

Third, the auditor must have access to \emph{a controlled conditioning space}. The space could be a set of prompts, captions, reference contexts, or multimodal queries semantically aligned with the target concept. These conditions define the neighborhood in which behavioral dependence is measured.

Under these assumptions, our goal is to construct a post-hoc detection statistic that estimates whether the inclusion or exclusion of a target copyrighted sample produces a measurable conditional distribution shift based on its infringement or non-infringement. 
The following sections will formalize it through differential privacy and a unified sensitivity measure.

\newpage
\section{Differential Privacy View of Copyright}
\label{sec: dp-view}

In this section, we formalize copyright memorization as a conditional privacy problem. We first introduce the notation and sample effects from the lens of influence function, and then reinterpret copyright exposure through differential privacy, conditional publicity, and its behavioral criteria.

\subsection{Notation and Problem Setup}
Let $\mathcal{X}$ denote the input space and $\mathcal{Y}$ denote the discrete or continuous output space. A foundation model parameterized by $\theta \in \Theta \subseteq \mathbb{R}^d$ is trained on a base dataset $D = \{z_1, z_2, \dots, z_n\}$, where each data point $z_i = (x_i, y_i) \in \mathcal{X} \times \mathcal{Y}$. 

For a point $z$ and parameter state $\theta$, let $\mathcal{L}(z; \theta)$ be the loss function. We assume that the empirical risk $R(\theta)$ is twice-differentiable and strictly convex in $\theta$, the objective function and its unique optimal estimator $\hat{\theta}$ are formulated as:
\begin{equation}
	R(\theta) = \frac{1}{n} \sum_{i=1}^n \mathcal{L}(z_i; \theta);
\end{equation}
\begin{equation}
	\hat{\theta} \triangleq \arg\min_{\theta \in \Theta} R(\theta) = \arg\min_{\theta \in \Theta} \frac{1}{n} \sum_{i=1}^n \mathcal{L}(z_i; \theta).
\end{equation}

In conditional generation scenarios, we denote $\mathbf{p} \in \mathcal{P}$ as a generalized conditioning prompt operator initialized from or semantically aligned with the neighborhood $U(z_c)$ of a target concept $z_c$ (e.g., stylistically, structurally, or contextually similar samples).

\subsection{Influence Functions}
Influence functions~\cite{3305381.3305576} estimate the impact of training points on model parameters without retraining the model from scratch. Scalable attribution methods such as TRAK~\cite{park2023trakattributingmodelbehavior} further motivate the use of local linearized approximations when direct counterfactual retraining is unavailable. For diffusion models, training-data attribution methods similarly aim to quantify how individual samples affect generated behavior~\cite{dai2023trainingdataattributiondiffusion, lin2025diffusionattributionscoreevaluating}. The change in model parameters due to removing a point $z$ from the training set is given by:
\begin{equation}
	\Delta \hat{\theta}_z = \hat{\theta}_{-z} - \hat{\theta} = \arg\min_{\theta \in \Theta} \frac{1}{n} \sum_{z_i \neq z} \mathcal{L}(z_i; \theta) - \arg\min_{\theta \in \Theta} \frac{1}{n} \sum_{i=1}^n \mathcal{L}(z_i; \theta).
\end{equation}
To avoid the computational cost of complete retraining, we compute the parameter change of $z$ by introducing a small perturbation $\epsilon$:
\begin{equation}
	\hat{\theta}_{\epsilon,z}=\arg\min_{\theta\in\Theta}\left[R(\theta)+\epsilon \mathcal{L}(z;\theta)\right].
\end{equation}

By forming a first-order linear approximation around $\hat{\theta}$~\cite{cook1982residuals}, the parameter change under upweighting $z$ is formalized as:
\begin{equation}
	\mathcal{I}_{\mathrm{up,params}}(z)\triangleq
	\left.\frac{d\hat{\theta}_{\epsilon,z}}{d\epsilon}\right|_{\epsilon=0}
	=-\mathbf{H}_{\hat{\theta}}^{-1}\:\nabla_{\theta}\mathcal{L}(z;\hat{\theta})
\end{equation}
where $\mathbf{H}_{\hat{\theta}}$ is the positive-definite Hessian matrix:
\begin{equation}
	\mathbf{H}_{\hat{\theta}}\triangleq\frac1n\sum_{i=1}^n\nabla_\theta^2\mathcal{L}(z_i;\hat{\theta}).
\end{equation}

Since removing a point $z$ is mathematically equivalent to shifting its sample weight by $\Delta \epsilon = -\frac{1}{n}$, the parameter shift can be linearly approximated as:
\begin{equation} 
	\label{eq: if_theta_approx}
	\Delta \hat{\theta}(z) \approx -\frac{1}{n} \mathcal{I}_{\mathrm{up,params}}(z) = \frac{1}{n} \mathbf{H}_{\hat{\theta}}^{-1} \nabla_{\theta} \mathcal{L}(z; \hat{\theta}).
\end{equation}
Correspondingly, the influence of sample $z$ on the model's loss prediction at point $z_{\mathrm{test}}$ can be estimated as:
\begin{equation}
	\mathcal{I}_{\mathrm{up,loss}}(z, z_{\mathrm{test}}) = - \nabla_{\theta} \mathcal{L}(z_{\mathrm{test}}; \hat{\theta})^\top \mathbf{H}_{\hat{\theta}}^{-1} \nabla_{\theta} \mathcal{L}(z; \hat{\theta}).
\end{equation} 

Despite theoretical elegance, influence functions suffer from the computational intractability of calculating $\mathbf{H}^{-1}$. Although prior work~\cite{basu2020secondordergroupinfluencefunctions, zheng2024intriguingpropertiesdataattribution, mlodozeniec2025influencefunctionsscalabledata} has developed approximations to the inverse Hessian matrix, these methods still require substantial memory and computation.

\subsection{From Privacy Bounds to Publicity Violations}
To reinterpret copyright memorization or infringement under a rigorous framework, we formalize model behaviors with differential privacy metrics.

\subsubsection{Differential Privacy}
Differential privacy is a formal notion of algorithmic privacy, which aims to prevent the release of private information~\cite{10.1007/978-3-540-79228-4_1, 10.1561/0400000042}.

Let $D_0 \in \mathcal{X}^m$ denote a database from the input domain $\mathcal{X}$, and two databases $D, D' \in \mathcal{X}^m$ are said to be neighbouring datasets if $d(D, D') \leq 1$, where $d$ represents the distance between two datasets. 
A randomized mechanism $M$ maps the input space $\mathcal{X}$ to a measurable output space $\mathcal{Y}$. 

Differential privacy is formally defined as follows:

\begin{definition}[Differential Privacy]
	A randomized mechanism $M: \mathcal{X}^m \rightarrow \mathcal{Y}$ is said to satisfy \emph{$(\epsilon, \delta)$-differential privacy} if for every pair of neighbouring databases $D$ and $D'$, and for every measurable output event $S \subseteq \mathcal{Y}$:
	\begin{equation} \label{eq: DP}
		\Pr[M(D) \in S] \leq e^{\epsilon} \cdot \Pr[M(D') \in S] + \delta
	\end{equation}
	with probability $1-\delta$, where $\epsilon$ is the privacy budget and $\delta$ is a failure probability for the definition (typically $\delta \leq \frac{1}{n^2}$), which means the privacy guarantee cannot hold with probability $\delta$.
\end{definition}

For a query $M: D_0 \rightarrow \mathcal{R}$, the global sensitivity (GS) is the maximum distance for any two neighbouring datasets:
\begin{equation} \label{eq: DPGS}
	GS(M) = \max_{D, D': d(D,D') \leq 1} \left\|M(D) - M(D')\right\|_{\mathcal{F}}
\end{equation}
which is independent of the actual dataset being queried.

Another measure of sensitivity, local sensitivity of $M$ at $D: D_0$, fixes one of the two datasets $D$ to be the actual dataset being queried, and considers all of its neighbors:
\begin{equation} \label{eq: DPLS}
	LS(M, D) = \max_{D': d(D,D') \leq 1} \left\| M(D) - M(D') \right\|_{\mathcal{F}}.
\end{equation}

\subsubsection{Conditional Privacy and Publicity}
As most large models produce outputs under conditioning spaces, we first define conditional differential privacy:

\begin{definition}[Conditional Differential Privacy] 
	A randomized mechanism $M: \mathcal{X}^m \times \mathcal{P} \rightarrow \mathcal{Y}$ conditioned on $\mathbf{p} \in \mathcal{P}$ is said to satisfy \emph{$(\epsilon, \delta, \mathbf{p})$-conditional differential privacy} if for every pair of neighbouring databases $D$ and $D'$ that differ by at most one single element associated with the semantic neighborhood $U(\mathbf{p})$, and for every measurable output event $S \subseteq \mathcal{Y}$:
	\begin{equation} \label{def: cdp}
		\Pr[M(D,\mathbf{p}) \in S] \leq e^{\epsilon} \cdot \Pr[M(D',\mathbf{p}) \in S] + \delta
	\end{equation}
 	with probability $1-\delta$, where $\epsilon$ is the conditional privacy budget and $\delta$ is a failure probability for the definition.
\end{definition}

\begin{definition}[Conditional Publicity]
	A randomized mechanism $M: \mathcal{X}^m \times \mathcal{P} \rightarrow \mathcal{Y}$ conditioned on $\mathbf{p} \in \mathcal{P}$ is said to satisfy \emph{$(\epsilon, \mathbf{p})$-conditional publicity} if there exist neighbouring training datasets $D$ and $D'$ that differ in at most one single element, and there exists at least one measurable output event $S \subseteq \mathcal{Y}$ such that:
	\begin{equation}
		\Pr[M(D, \mathbf{p}) \in S] 
		> e^\epsilon \cdot \Pr[M(D', \mathbf{p}) \in S] 
		\gg \Pr[M(D', \mathbf{p}) \in S]
	\end{equation}		
	where $\epsilon > 200$ indicates a substantial violation of privacy, i.e., the model output strongly depends on the presence of a specific concept in the training set.
\end{definition}
The notion of conditional publicity implies that an algorithm could significantly alter its relevant outputs when a single concept in the training dataset is modified. On the contrary, an input that the model has not seen, namely, non-infringed data, could be relatively private:

\begin{proposition}
	Let $x$ be a data point that does not appear in either $D$ or its neighbouring dataset $D'$. In the idealized case where the conditional output distribution of $M$ is independent of $x$ and its semantic neighborhood, the mechanism $M$ satisfies $(0,0,\mathbf{p})$-conditional differential privacy with respect to $x$ for any condition $\mathbf{p}$.
\end{proposition}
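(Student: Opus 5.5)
The plan is to show that the hypothesis forces the two conditional output distributions to coincide, and then check that equality of distributions satisfies the defining inequality \eqref{def: cdp} with $\epsilon=\delta=0$.

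\textbf{Step 1: neighbouring pairs.} Fix an arbitrary condition $\mathbf{p}\in\mathcal{P}$. Consider any pair of neighbouring databases $D,D'$ relevant to $x$, that is, differing in at most one element associated with $x$ or its semantic neighborhood $U(\mathbf{p})$. Since $x$ appears in neither $D$ nor $D'$, the element on which they differ lies in the neighborhood on which, by the idealized assumption, the conditional law of $M(\cdot,\mathbf{p})$ does not depend.

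\textbf{Step 2: equality of conditional laws.} Making this assumption precise, I read it as saying that the map $D\mapsto \mathrm{Law}(M(D,\mathbf{p}))$ is invariant under changing that one element. Hence $\mathrm{Law}(M(D,\mathbf{p}))=\mathrm{Law}(M(D',\mathbf{p}))$ as probability measures on $\mathcal{Y}$. Consequently, for every measurable $S\subseteq\mathcal{Y}$ we have $\Pr[M(D,\mathbf{p})\in S]=\Pr[M(D',\mathbf{p})\in S]$.

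\textbf{Step 3: the inequality.} Equality gives \eqref{def: cdp} with $e^{0}=1$ and $\delta=0$, for every $S$ and every such pair $(D,D')$. Swapping the roles of $D$ and $D'$ gives the symmetric inequality as well. The "with probability $1-\delta$" clause is then vacuous, since the inequality holds deterministically. Because $\mathbf{p}$ was arbitrary, the conclusion holds for all conditions.

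\textbf{Main obstacle.} The difficulty is not computational but definitional. The statement's independence hypothesis must be formalized so that it actually covers the differing element of each neighbouring pair. I would state explicitly that "independent of $x$ and its semantic neighborhood" means invariance of the conditional output law under any single-element modification within $U(\mathbf{p})\cup\{x\}$. With that reading fixed, the argument is immediate. I would also remark that this is the idealized limit: in practice one only obtains approximate invariance, i.e.\ small $\epsilon$.
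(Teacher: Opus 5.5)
Your argument is correct and is exactly the reasoning the paper leaves implicit. The paper states this proposition without any proof, treating it as immediate: if the conditional law of $M(\cdot,\mathbf{p})$ does not depend on $x$ or its neighborhood, the two laws coincide, and the defining inequality holds with $\epsilon=\delta=0$.

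One correction to your formalization. You call $U(\mathbf{p})$ the semantic neighborhood of $x$ and stipulate invariance under single-element changes in $U(\mathbf{p})\cup\{x\}$. Since $\mathbf{p}$ is arbitrary, $U(\mathbf{p})$ need not be related to $x$, so invariance over it is not what the hypothesis gives. As written, your stipulation is simply the $(0,0,\mathbf{p})$ conclusion itself and never uses the assumption about $x$.

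The faithful reading lets the qualifier ``with respect to $x$'' restrict the neighbouring pairs to those whose differing element lies in the semantic neighborhood $U(x)$ of $x$ (intersected with $U(\mathbf{p})$ if you keep the definition's quantifier). You then apply the independence hypothesis only to those pairs. This restriction is also what locates the differing element in your Step 1, since $x\notin D\cup D'$ alone says nothing about where $D$ and $D'$ differ.
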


The requirement $x \notin D$ and $x \notin D'$ is often too strict in realistic models, where semantic neighbors of $x$ may still influence model behavior even if $x$ itself is not present in the training set.
To address this, we introduce a relaxed notion of approximate relative privacy:

\begin{definition}[Approximate Relative Privacy]
	Let $x$ be a data point such that $x \notin D$ and $x \notin D'$, and let $S \subseteq \mathcal{Y}$ be a measurable output event under conditions in $U(\mathbf{p})$.
	We say the model $M$ satisfies $(\epsilon, \delta, \mathbf{p})$-approximate relative conditional differential privacy with respect to $x$ if:
	\begin{equation}
		\Pr[M(D, \mathbf{p}) \in S] \leq e^{\epsilon} \cdot \Pr[M(D', \mathbf{p}) \in S] + \delta
	\end{equation}
	for small $\epsilon > 0$ and $\delta \approx 0$, indicating negligible influence from $x$ or its semantic neighborhood.
\end{definition}

The above proposition and definition formalize the notion that non-infringed data should not exert measurable influence on the model's output distribution. In contrast to infringed data that violates conditional privacy guarantees, non-infringed data satisfies strict or approximate relative privacy. 

This theoretical distinction lays the foundation for our detection criterion: if a concept induces statistically significant behavioral change in model outputs, it suggests potential memorization and copyright infringement.

\subsection{Formal Criteria for Infringement and Non-Infringement}

Copyright infringement may occur when an algorithm $M$ is trained on a dataset $D$ that includes a subset of copyrighted or unauthorized data samples
$D_C = \{ x_{c_1}, x_{c_2}, \dots, x_{c_n} \} \subset D$, without permission.
Evidence for such an infringement claim is typically established by demonstrating that the model's outputs reproduce or are substantially similar to the protectable expressive elements of the samples in $D_C$.

Building upon the formalism of conditional differential privacy and publicity, we now define mathematical behavioral criteria for: when an algorithm $M$ exhibits evidence of infringement upon a specific data point, and when it does not, based on observable model behavior.

\begin{definition}[Copyright Infringement]
	Let $x_c \in D_C$ denote a copyrighted data point, and let $\mathbf{p}$ be a conditional input semantically aligned with $x_c$. We say that model $M$ trained on $D$ exhibits infringement evidence upon $x_c$ if there exists a measurable output event $S \subseteq \mathcal{Y}$ such that:
	\begin{equation}
		\Pr[M(D, \mathbf{p}) \in S] \gg \Pr[M(D', \mathbf{p}) \in S]
	\end{equation}
	where $D' = D \setminus \{x_c\}$ is a neighboring dataset.
\end{definition}

This definition implies that the relevant output of $M$ is significantly influenced by the presence of $x_c$, thereby providing evidence of copyright exposure induced by memorization.

\begin{definition}[Copyright Non-Infringement]
	Let $x$ be a non-infringed data point such that $x \notin D$ for all training datasets considered. We say that model $M$ does not exhibit infringement evidence upon $x$ if for any input condition $\mathbf{p}$ and for every measurable output event $S \subseteq \mathcal{Y}$ such that:
	\begin{equation}
		\Pr[M(D, \mathbf{p}) \in S] = \Pr[M(D', \mathbf{p}) \in S] .
	\end{equation}
	
	To allow for a relaxed setting, we say that $M$ satisfies approximate non-infringement evidence if the following $(\epsilon, \delta, \mathbf{p})$-differential privacy condition holds:
	\begin{equation}
		\Pr[M(D, \mathbf{p}) \in S] \leq e^\epsilon \cdot \Pr[M(D', \mathbf{p}) \in S] + \delta .
	\end{equation}
\end{definition}

This definition states that the output distribution is invariant, or approximately invariant, with respect to the inclusion or exclusion of $x$, thereby indicating that $x$ has no measurable influence on model behavior under specific conditions.

\newpage
\section{Dual-Branch Conditional Sensitivity (DCS)}
\label{sec: unified-sensitivity}

To establish a rigorous mathematical foundation for post-hoc copyright infringement detection across modalities, we introduce the \textbf{Dual-Branch Conditional Sensitivity (DCS)} framework. 
Our aim is to transform the privacy budget from previous sections into an operational statistic that can be estimated without retraining the foundation model from scratch. 

DCS is used for denoting complete detection framework, rather than a single scalar quantity. It consists of three levels: (i) \emph{dual-branch perturbation}, which constructs learning and unlearning branches around the target; (ii) \emph{pointwise conditional sensitivity} $CS(z_c,\mathbf{p})$, which measures the branch divergence under one aligned or orthogonal condition; and (iii) the calibrated DCS detection statistic, $CDS(z_c)$, which aggregates pointwise sensitivity over aligned conditions and subtracts or normalizes by orthogonal background sensitivity.

The framework proceeds by first defining a counterfactual privacy budget (Section~\ref{sec: unified-1}), then constructing dual-branch perturbations (Section~\ref{sec: unified-2}), relating their displacement to counterfactual influence through influence functions (Section~\ref{sec: unified-3}), projecting the resulting parameter variation into an observable output space (Section~\ref{sec: unified-4}), and finally calibrating the resulting detection statistic against orthogonal conditions (Section~\ref{sec: unified-5}).

\subsection{Counterfactual Privacy Budget} \label{sec: unified-1}

In the canonical framework of Conditional Differential Privacy (CDP), a randomized generative mechanism $M$ can be formalized as $ M:\mathcal{D}\times\mathcal{P}\to \mathcal{Y} $, where $\mathcal{D}$ denotes the space of training datasets, $\mathcal{P}$ denotes the conditioning space, and $\mathcal{Y}$ denotes the output space.
For a dataset $D$, we write $\theta_D\in\Theta$ for the corresponding trained parameter state.

Let $D$ and $D'$ be neighboring datasets that differ only in one target copyrighted concept $z_c$, such that $D \setminus D' = \{z_c\}$.
For a given condition $\mathbf{p} \in \mathcal{P}$ and a measurable output event $S \subseteq \mathcal{Y}$, the conditional empirical privacy budget captures the maximum log-likelihood ratio triggered by the counterfactual inclusion or exclusion of $z_c$. Derived from Eq.~\ref{eq: DP}, we define the \emph{conditioned counterfactual privacy budget} as:
\begin{equation}
	\label{def: budget}
	\epsilon(z_c, \mathbf{p}) \triangleq \sup_{S \subseteq \mathcal{Y}} \left| \ln \frac{\Pr[M(D, \mathbf{p}) \in S]}{\Pr[M(D', \mathbf{p}) \in S]} \right|.
\end{equation}
When the conditional outputs admit densities or probability mass functions $q_D(y|\mathbf{p})$ and $q_{D'}(y|\mathbf{p})$ with respect to a common base measure, Eq.~\ref{def: budget} can be written as:
\begin{equation}
	\epsilon(z_c,\mathbf{p})
	=
	\operatorname*{ess\,sup}_{y\in\mathcal{Y}}
	\left|
	\ln \frac{q_D(y|\mathbf{p})}{q_{D'}(y|\mathbf{p})}
	\right|.
\end{equation}
For practical estimation in a chosen observation, feature, score, or log-density space, we use the norm-based surrogate:
\begin{equation}
	\label{def: budget_widehat}
	\widehat{\epsilon}(z_c,\mathbf{p})
	\triangleq
	\left\|
	\ln \frac{q_D(\cdot|\mathbf{p})}{q_{D'}(\cdot|\mathbf{p})}
	\right\|_{\mathcal{F}},
\end{equation}
where $\|\cdot\|_{\mathcal{F}}$ denotes the norm induced by the selected observable representation.
Unlike the essential-supremum privacy budget $\epsilon(z_c,\mathbf{p})$, the surrogate $\widehat{\epsilon}(z_c,\mathbf{p})$ measures the aggregate log-density drift in the chosen observation space and is used as the operational quantity in the sensitivity bound below.

\subsection{Dual-Branch Perturbation} \label{sec: unified-2}
In practical digital forensics, direct evaluation of Eq.~\ref{def: budget} is computationally infeasible because the counterfactual trained parameter state $\theta_{D'}$ cannot be directly obtained without retraining model from scratch. To avoid the computation process, our framework operationalizes the local effect of $z_c$ by applying two opposite optimization updates to the pretrained weight state $\theta_D$. Such lightweight local adaptation can be implemented with ordinary fine-tuning or parameter-efficient updates such as LoRA~\cite{hu2022lora}.

Given a target sample $z_c$, we initiate two parallel, lightweight fine-tuning branches to construct explicit parameter variants. The learning branch, where the indicator $I = +1$, takes the step to learn the target concept:
\begin{equation} \label{eq: learning_branch}
	\theta_{D^+} = \theta_D - \eta \nabla_\theta \mathcal{L}(z_c; \theta_D).
\end{equation}
Concurrently, the unlearning branch, where the indicator $I = -1$, takes the opposite step to unlearn that concept:
\begin{equation} \label{eq: unlearning_branch}
	\theta_{D^-} = \theta_D + \eta \nabla_\theta \mathcal{L}(z_c; \theta_D),
\end{equation}
where $\mathcal{L}(z_c; \theta_D)$ denotes the task-specific loss such as causal language modeling or score-matching, and $\eta > 0$ represents a small local step size.
Here, we use $D^+$ and $D^-$ as shorthand for the two locally perturbed model states, not as fully retrained datasets. But its ideal state is the same as the fully retrained one.

Since $\theta_{D^+}$ and $\theta_{D^-}$ are explicit parameter states produced by these two local updates, their displacement in the parameter space $\Theta$ provides an operational quantity that can be compared with the unavailable counterfactual drift induced by removing $z_c$:
\begin{equation}
	\label{eq: absolute_displacement}
	\Delta \theta \triangleq \theta_{D^+} - \theta_{D^-} = -2\eta \nabla_\theta \mathcal{L}(z_c; \theta_D)
\end{equation}

\subsection{Influence-Function Coupling} \label{sec: unified-3}

To relate the operational displacement to the counterfactual removal of $z_c$, we use the influence-function approximation from Eq.~\ref{eq: if_theta_approx}. If $D'=D\setminus\{z_c\}$ denotes the neighboring dataset with the target removed, then:
\begin{equation}
	\label{eq: influence_bridge}
	\theta_{D'} - \theta_D \approx \frac{1}{n} \mathbf{H}^{-1} \nabla_\theta \mathcal{L}(z_c; \theta_D),
\end{equation}
where $n$ is the cardinality of the base training dataset, and $\mathbf{H} = \nabla^2 R(\theta_D)$ is the Hessian matrix:
\begin{equation}
	\label{eq: isolated_gradient}
	\nabla_\theta \mathcal{L}(z_c; \theta_D)
	\approx n\mathbf{H}(\theta_{D'}-\theta_D)
	=-n\mathbf{H}(\theta_D-\theta_{D'}).
\end{equation}
Substituting Eq.~\ref{eq: isolated_gradient} into Eq.~\ref{eq: absolute_displacement} yields:
\begin{equation}
	\label{eq: final_param_delta}
	\Delta \theta \approx 2n\eta \cdot \mathbf{H} (\theta_D - \theta_{D'}).
\end{equation}

In summary, Eq.~\ref{eq: final_param_delta} shows that the dual-branch displacement provides an operational counterpart to the counterfactual parameter change caused by removing the target sample $z_c$. Under the approximation of influence function, $\Delta\theta$ is proportional to the Hessian-weighted difference between $\theta_D$ and $\theta_{D'}$. Therefore, the displacement is not an arbitrary perturbation, but a curvature-aware manifestation of the target sample's contribution to the fine-tuned model.

This interpretation highlights that the influence of $z_c$ depends jointly on its gradient contribution and on the local curvature of the empirical risk landscape. Samples whose gradients align with sensitive directions of the model can induce larger operational displacements, while gradients along less sensitive or highly constrained directions may produce smaller changes. Hence, the displacement serves as a practical proxy for measuring the counterfactual influence of $z_c$ on the model parameters.

\subsection{Conditional Sensitivity} \label{sec: unified-4}

\paragraph{Definition}
To translate internal parametric variations into measurable evidence, we project the dual-branch parameters back into the output space $\mathcal{Y}$ under the generalized condition $\mathbf{p}$. We define the \emph{Conditional Sensitivity} $CS(z_c, \mathbf{p})$ via the Frobenius norm of the macroscopic output behavior divergence:
\begin{equation}
	\label{def: conditional_sensitivity}
	CS(z_c,\mathbf{p}) \triangleq d_{\mathcal{O}}\left[ O(D^+,\mathbf{p}), O(D^-,\mathbf{p}) \right],
\end{equation}
where $O \in \mathcal{O}$ is the observable representation, such as Frobenius, KL divergence and cosine, $d_{\mathcal{O}}$ is the distance of certain modality. In the following passage, we will mostly use the Frobenius as the instantiation:
\begin{equation}
	CS(z_c, \mathbf{p}) = \left\| M(D^+, \mathbf{p}) - M(D^-, \mathbf{p}) \right\|_{\mathcal{F}}
\end{equation}

\paragraph{First-Order Bound} The conditional sensitivity admits a first-order upper bound that connects the observable dual-branch output divergence to the counterfactual exposure of the target sample and the local curvature of the empirical risk landscape:

\begin{proposition}[Conditional Sensitivity Bound]
	Assume that $M(D,\mathbf{p})$ is locally differentiable in $\theta$ around $\theta_D$, that the training step size $\eta$ is sufficiently small for a first-order approximation, and that the output representation is chosen in a feature, score, logit, or log-density space where the norm $\|\cdot\|_{\mathcal{F}}$ is well-defined. Assume further that the norm-based log-density drift in this representation is measured by $\widehat{\epsilon}(z_c,\mathbf{p})$ as defined in Eq.~\ref{def: budget_widehat}. Then the conditional sensitivity satisfies the first-order bound:
	\begin{equation}
		\label{eq: conditional_sensitivity_bound}
		CS(z_c,\mathbf{p})
		\lesssim
		2n\eta \cdot
		\lambda_{\max}(\mathbf{H}) \cdot
		\widehat{\epsilon}(z_c,\mathbf{p}).
	\end{equation}
\end{proposition}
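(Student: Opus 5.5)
The plan is to linearize the output map around $\theta_D$, push the dual-branch displacement through that linearization, and then use the influence-function coupling of Eq.~\ref{eq: final_param_delta} to replace the gradient of the target loss by a curvature-weighted counterfactual shift. By local differentiability, write $M(D^\pm,\mathbf{p}) = M(D,\mathbf{p}) + \mathbf{J}_{\mathbf{p}}(\theta_{D^\pm}-\theta_D) + O(\eta^2)$, where $\mathbf{J}_{\mathbf{p}} = \partial M(D,\mathbf{p})/\partial\theta$ is the Jacobian of the chosen observable representation (feature, logit, score or log-density) at $\theta_D$. Subtracting the two branches, the $O(\eta^2)$ terms largely cancel by the symmetry of Eqs.~\ref{eq: learning_branch}--\ref{eq: unlearning_branch} (the error is in fact $O(\eta^3)$ for the symmetric difference). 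This gives $CS(z_c,\mathbf{p}) \approx \|\mathbf{J}_{\mathbf{p}}\Delta\theta\|_{\mathcal{F}}$ with $\Delta\theta$ as in Eq.~\ref{eq: absolute_displacement}.

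Next I substitute Eq.~\ref{eq: final_param_delta}, which gives $\mathbf{J}_{\mathbf{p}}\Delta\theta \approx 2n\eta\,\mathbf{J}_{\mathbf{p}}\mathbf{H}(\theta_D-\theta_{D'})$. Since $\mathbf{H}$ is symmetric positive definite, its eigenbasis lets me commute it out in norm, bounding the expression by $2n\eta\,\lambda_{\max}(\mathbf{H})\,\|\mathbf{J}_{\mathbf{p}}(\theta_D-\theta_{D'})\|_{\mathcal{F}}$. Justifying this requires either (a) working in the $\mathbf{H}$-adapted geometry, bounding $\|\mathbf{J}_{\mathbf{p}}\mathbf{H}v\| \le \|\mathbf{J}_{\mathbf{p}}\|\,\lambda_{\max}\|v\|$ and absorbing the operator-norm factor of $\mathbf{J}_{\mathbf{p}}$ into the $\lesssim$ constant, or (b) assuming $\mathbf{J}_{\mathbf{p}}$ approximately commutes with $\mathbf{H}$ on the relevant subspace. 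I would take route (a) and state explicitly that the implicit constant involves the ratio $\|\mathbf{J}_{\mathbf{p}}\|\,\|v\|/\|\mathbf{J}_{\mathbf{p}}v\|$, which is the conditioning of the Jacobian along the influence direction $v = \theta_D-\theta_{D'}$.

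The last step identifies $\|\mathbf{J}_{\mathbf{p}}(\theta_D-\theta_{D'})\|_{\mathcal{F}}$ with $\widehat{\epsilon}(z_c,\mathbf{p})$. When the representation is the log-density, a first-order Taylor expansion in $\theta$ gives $\ln q_D(\cdot|\mathbf{p}) - \ln q_{D'}(\cdot|\mathbf{p}) \approx \mathbf{J}_{\mathbf{p}}(\theta_D-\theta_{D'})$. Taking $\|\cdot\|_{\mathcal{F}}$ of both sides reproduces exactly the surrogate in Eq.~\ref{def: budget_widehat}. For feature, score or logit spaces, I would argue that the log-density is a locally Lipschitz (or locally bi-Lipschitz) function of the representation, so the two quantities agree up to constants absorbed into $\lesssim$. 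Chaining the three steps yields Eq.~\ref{eq: conditional_sensitivity_bound}.

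The main obstacle is the middle step: extracting $\lambda_{\max}(\mathbf{H})$ as a scalar factor even though $\mathbf{H}$ sits between $\mathbf{J}_{\mathbf{p}}$ and the counterfactual shift. A clean inequality with constant one holds only under structural assumptions. I would present the general operator-norm version and remark that the stated form is exact up to the Jacobian conditioning constant. A secondary issue is that the influence-function approximation of Eq.~\ref{eq: influence_bridge} is itself first order in $1/n$, so the bound should be read as holding to leading order in both $\eta$ and $1/n$.
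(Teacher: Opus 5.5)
This is essentially the paper's argument: first-order expansion of $M$ in $\theta$ to get $CS(z_c,\mathbf{p}) \approx \|[\nabla_\theta M(D,\mathbf{p})]^T\Delta\theta\|_{\mathcal{F}}$, then substitution of Eq.~\ref{eq: final_param_delta}, extraction of $\lambda_{\max}(\mathbf{H})$, and identification of the remaining term with the log-density drift $\widehat{\epsilon}(z_c,\mathbf{p})$. The one difference is the middle step: the paper invokes ``the bounded operator norm inequality'' to obtain $\|[\nabla_\theta M]^T\mathbf{H}v\|_{\mathcal{F}} \le \lambda_{\max}(\mathbf{H})\,\|[\nabla_\theta M]^T v\|_{\mathcal{F}}$ with constant one, which fails in general (for instance when $v=\theta_D-\theta_{D'}$ lies in the kernel of the Jacobian but $\mathbf{H}v$ does not), whereas you correctly make the hidden Jacobian-conditioning factor $\|\mathbf{J}_{\mathbf{p}}\|\,\|v\|/\|\mathbf{J}_{\mathbf{p}}v\|$ explicit, so your version is the more careful reading of the same $\lesssim$ claim.
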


\begin{proof}
	Since the optimization step size $\eta$ is small, both perturbed parameter states remain in a local neighborhood of $\theta_D$ where the mechanism is well approximated by its first-order expansion:
	\begin{equation}
		M(D^+, \mathbf{p}) - M(D^-, \mathbf{p}) \approx \left[ \nabla_\theta M(D, \mathbf{p}) \right]^T \cdot (\theta_{D^+}-\theta_{D^-})
		= \left[ \nabla_\theta M(D, \mathbf{p}) \right]^T \cdot \Delta\theta.
	\end{equation}
	Substituting Eq.~\ref{eq: final_param_delta} gives:
	\begin{equation}
		CS(z_c,\mathbf{p}) \approx 2n\eta \cdot \left\| \left[ \nabla_\theta M(D,\mathbf{p}) \right]^T \cdot \mathbf{H}(\theta_D-\theta_{D'}) \right\|_{\mathcal{F}}.
	\end{equation}
	
	By applying the bounded operator norm inequality, the empirical Hessian matrix $\mathbf{H}$ can be structurally decoupled from the inner Frobenius matrix product and projected into the spectral domain as the local maximum eigenvalue, or local principal curvature scalar, $\lambda_{\max}(\mathbf{H})$:
	\begin{equation}
		\label{eq: decoupled_sensitivity}
		CS(z_c,\mathbf{p}) \le 2n\eta \cdot \lambda_{\max}(\mathbf{H}) \cdot \left\| \left[ \nabla_\theta M(D,\mathbf{p}) \right]^T \cdot (\theta_D-\theta_{D'}) \right\|_{\mathcal{F}}.
	\end{equation}
	
	When the observation map is chosen in log-density or score space, the remaining differential term approximates the log-density drift:
	\begin{equation}
		\left[ \nabla_\theta M(D,\mathbf{p}) \right]^T \cdot (\theta_D-\theta_{D'}) \approx \ln \frac{q_D(\cdot|\mathbf{p})}{q_{D'}(\cdot|\mathbf{p})}.
	\end{equation}
	The pointwise magnitude of this log-density drift is controlled by the essential-supremum privacy budget $\epsilon(z_c,\mathbf{p})$ in Eq.~\ref{def: budget}. In the selected observation space, we summarize the same drift by the norm-based surrogate $\widehat{\epsilon}(z_c,\mathbf{p})$ in Eq.~\ref{def: budget_widehat}.
	Substituting this surrogate into Eq.~\ref{eq: decoupled_sensitivity} yields Eq.~\ref{eq: conditional_sensitivity_bound}.
\end{proof}

The bound shows that the observable sensitivity is governed by four quantities: the local step size $\eta$, the training-set scale $n$, the curvature of the empirical risk landscape $\lambda_{\max}(\mathbf{H})$, and the counterfactual exposure captured by $\widehat{\epsilon}(z_c,\mathbf{p})$. 
 A target sample that induces a larger counterfactual change in the conditional output distribution can therefore produce a larger gap, especially in regions where
the local risk landscape has high curvature. Conversely, if the target has little counterfactual effect on the model under condition $\mathbf{p}$, the induced dual-branch divergence is expected to remain small. In summary, conditional sensitivity provides an operational detection signal for measuring behavioral dependence on the target sample.

\subsection{Calibrated Detection Statistic} \label{sec: unified-5}

The theoretical bound motivates a post-hoc detection statistic based on the observable dual-branch behavioral gap. Given a target concept $z_c$ and a set of aligned conditions $\mathcal{P}_{c} \subseteq \mathcal{P}$ around $z_c$, we estimate the raw aligned sensitivity:
\begin{equation}
	S_c(z_c) \triangleq \mathbb{E}_{\mathbf{p} \sim \mathcal{P}_{c}} \left[ CS(z_c,\mathbf{p}) \right].
\end{equation}
In practice, $CS(z_c,\mathbf{p})$ can be computed in a modality-appropriate observation space, such as image embeddings, text logits, score-matching objectives, or other feature representations.

 A large value of $S_c(z_c)$ indicates that learning and unlearning perturbations around $z_c$ produce a measurable behavioral gap, which provides evidence that the target exerts non-negligible influence and explicit infringement on the model.

However, the dual-branch perturbation procedure intentionally perturbs model parameters. To separate concept-specific effects from global parameter shift, an auditor may evaluate an orthogonal conditioning set $\mathcal{P}_{\perp}$ whose prompts are semantically unrelated to $z_c$. The corresponding background sensitivity is:
\begin{equation}
	S_{\perp}(z_c) \triangleq \mathbb{E}_{\mathbf{p} \sim \mathcal{P}_{\perp}}
	\left[ CS(z_c,\mathbf{p}) \right].
\end{equation}
So the calibrated detection statistic (CDS) is then defined as:
\begin{equation}
	CDS(z_c) \triangleq S_c(z_c)-S_{\perp}(z_c).
\end{equation}
Equivalently,
\begin{equation}
	CDS(z_c) = \mathbb{E}_{\mathbf{p} \sim \mathcal{P}_{c}} 
	\left[ CS(z_c,\mathbf{p}) \right] - \mathbb{E}_{\mathbf{p} \sim \mathcal{P}_{\perp}} \left[ CS(z_c,\mathbf{p}) \right].
\end{equation}
This calibrated statistic is intended to retain target-specific sensitivity while reducing output changes caused by generic fine-tuning instability.
When a normalized statistic is preferred, one may use:
\begin{equation}
	\widetilde{CDS}(z_c) \triangleq \frac{S_c(z_c)} {S_{\perp}(z_c)+\tau},
\end{equation}
where $\tau>0$ is a small stabilizer. This form measures the aligned sensitivity relative to the background sensitivity induced by orthogonal conditions.

The calibration gives a direct statistical interpretation. $S_{\perp}(z_c)$ estimates the sensitivity that would be observed even if the target concept were irrelevant, whereas $S_c(z_c)$ measures sensitivity in the concept neighborhood. A reliable infringement signal should therefore satisfy two conditions: the aligned sensitivity is large, and the gap between aligned and orthogonal sensitivity is also large. This prevents the detector from confusing general model instability with target-specific memorization.

For practical detection, the final confidence score can be obtained by mapping the calibrated statistic through a monotone scoring function:
\begin{equation}
	f(z_c) = \sigma \left[ a \cdot CDS(z_c)+b \right],
\end{equation}
where $\sigma(\cdot)$ is the sigmoid function.
A high value of $f(z_c)$ indicates strong evidence that the target concept produces a conditional distributional shift beyond generic fine-tuning drift, leading to explicit copyright infringement.

\newpage
\section{Modality-Specific Instantiations}
\label{sec: modality-specific}
\subsection{Linear Regression Models}
We first instantiate the unified sensitivity framework in a ridge-regularized linear regression model (LRM)~\cite{montgomery2021introduction}. This case serves as a closed-form sanity check: every object in the previous section can be explicitly computed. To complement the derivation, we use the diabetes dataset from scikit-learn~\cite{pedregosa2018scikitlearnmachinelearningpython} for empirical verification.

\paragraph{Model Setup}
Let $D=\{(x_i,y_i)\}_{i=1}^n$, where $x_i \in \mathbb{R}^d$, $\|x_i\|_2 \leq 1$, and $|y_i| \leq 1$. Let $\mathbf{X}\in \mathbb{R}^{n\times d}$ be the design matrix with rows $x_i^\top$, and let $y=(y_1,\dots,y_n)^\top$. We consider the ridge objective:
\begin{equation}
	\mathcal{L}_{\mathrm{LRM}}(\beta;D) = \frac{1}{2n}\|\mathbf{X}\beta-y\|_2^2 + \frac{\rho}{2}\|\beta\|_2^2,
\end{equation}
where $\rho>0$ is the regularization coefficient. The corresponding mechanism outputs the unique minimizer:
\begin{equation}
	\hat{\beta}_D = \left(\frac{1}{n}\mathbf{X}^\top\mathbf{X}+\rho I\right)^{-1} \left(\frac{1}{n}\mathbf{X}^\top y\right).
\end{equation}
We then define the empirical Hessian as:
\begin{equation}
	\mathbf{H} \triangleq \nabla_\beta^2 \mathcal{L}_{\mathrm{LRM}}(\hat{\beta}_D;D)
	=\frac{1}{n}\mathbf{X}^\top\mathbf{X}+\rho I.
\end{equation}
Since $\lambda_{\min}(\mathbf{H})\geq \rho$, the inverse Hessian exists and is bounded by $\|\mathbf{H}^{-1}\|_2\leq \frac{1}{\rho}$.

\paragraph{Counterfactual Removal via Influence Functions}
Let $z_c=(x_c,y_c)$ be the target copyrighted sample and $D'=D\setminus\{z_c\}$. For the single-sample squared loss:
\begin{equation}
	\ell_c(\beta)=\frac{1}{2}(x_c^\top\beta-y_c)^2,
\end{equation}
the gradient at the trained model is:
\begin{equation}
	\nabla_\beta \ell_c(\hat{\beta}_D) = x_c r_c,
\end{equation}
where the residual is defined as $r_c \triangleq x_c^\top\hat{\beta}_D-y_c$. Since $\|x_c\|_2\leq 1$ and $|y_c|\leq 1$, and since ridge regularization gives $\frac{1}{\rho}$, the residual satisfies:
\begin{equation}
	|r_c| \leq |x_c^\top\hat{\beta}_D|+|y_c|
	\leq \frac{1}{\rho}+1.
\end{equation}
By employing the influence-function approximation from Eq.~\ref{eq: if_theta_approx}, the counterfactual parameter shift induced by removing $z_c$ satisfies:
\begin{equation}
	\hat{\beta}_D-\hat{\beta}_{D'} \approx -\frac{1}{n}\mathbf{H}^{-1}x_c r_c.
\end{equation}
Consequently,
\begin{equation}
	\|\hat{\beta}_D-\hat{\beta}_{D'}\|_2 \leq \frac{1}{n} \|\mathbf{H}^{-1}\|_2 \cdot \|x_c\|_2 \cdot |r_c|
	\leq
	\frac{1+\rho}{n\rho^2}.
\end{equation}

\begin{figure}
	\centering
	\includegraphics[width=\textwidth]{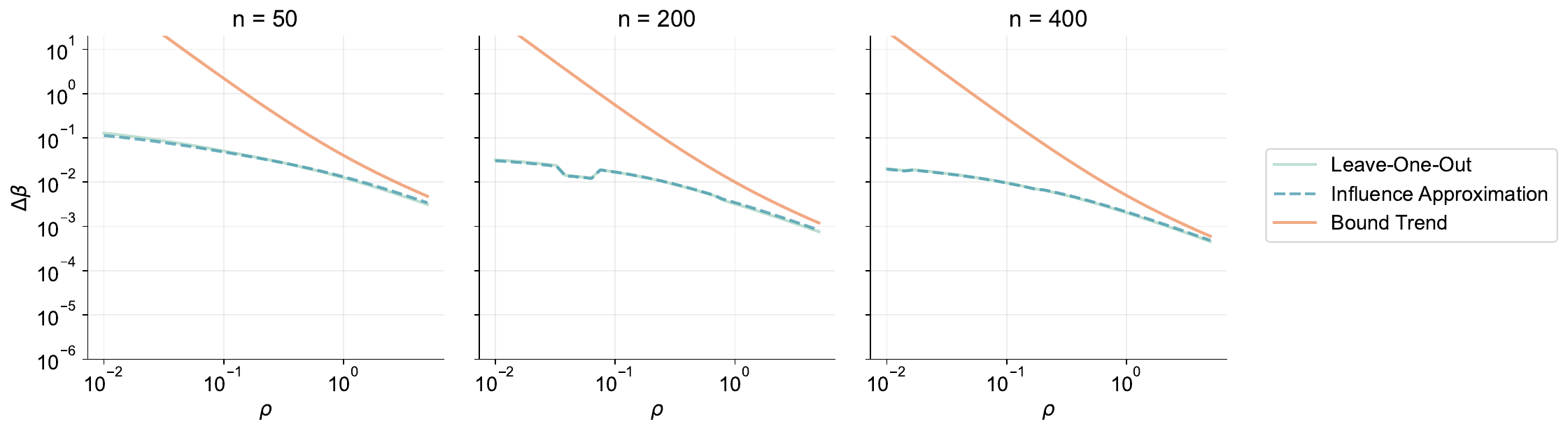}
	\caption{\textbf{Effect of Ridge Regularization on Sensitivity in LRM.} 
		The figure shows the change of parameter displacement $\Delta\beta=\|\hat{\beta}_D-\hat{\beta}_{D'}\|_2$ after removing a target sample under different regularization strengths $\rho$ and training-set sizes $n$.
		The exact leave-one-out solution closely matches the influence-function approximation, while the theoretical upper-bound trend decreases as $n$ and $\rho$ increase. This validates the closed-form prediction that ridge curvature stabilizes the model and suppresses the contribution of an individual sample.}
	\label{fig: vis_lrm}
\end{figure}

The resulting upper-bound trend, together with the exact leave-one-out displacement and the influence-function approximation, is shown in Figure~\ref{fig: vis_lrm}.

\paragraph{Linear Privacy-Budget Surrogate}
To connect this parameter displacement with the conditional privacy budget, suppose that the prediction at a test feature $x_{\mathrm{test}}$ follows a Gaussian observation model with variance $\sigma^2$:
\begin{equation}
	y_{\mathrm{test}}\mid D
	\sim
	\mathcal{N}(x_{\mathrm{test}}^\top\hat{\beta}_D,\sigma^2).
\end{equation}
For neighboring datasets $D$ and $D'$, define $\mu_D=x_{\mathrm{test}}^\top\hat{\beta}_D$ and $\mu_{D'}=x_{\mathrm{test}}^\top\hat{\beta}_{D'}$. Under the Gaussian
observation model, the pointwise log-likelihood ratio is:
\begin{equation}
	\log \frac{q_D(y|x_{\mathrm{test}})} {q_{D'}(y|x_{\mathrm{test}})} 
	= \frac{\mu_D-\mu_{D'}}{\sigma^2} \left( y-\frac{\mu_D+\mu_{D'}}{2} \right).
\end{equation}
For bounded outputs satisfying the inequality $\left|y-\frac{\mu_D+\mu_{D'}}{2}\right|\leq B_y$, this gives:
\begin{equation}
	\widehat{\epsilon}_{\mathrm{LRM}}(x_{\mathrm{test}})
	\lesssim \frac{B_y}{\sigma^2} \left| x_{\mathrm{test}}^\top (\hat{\beta}_D-\hat{\beta}_{D'}) \right|
	\approx \frac{B_y}{n\sigma^2} \left| x_{\mathrm{test}}^\top\mathbf{H}^{-1}x_c	\right| \cdot |r_c|.
\end{equation}

This expression is the linear-model counterpart of the counterfactual privacy budget in Eq.~\ref{def: budget}: the privacy exposure is governed by the residual $r_c$ and by the leverage of the target direction through the inverse Hessian.

\begin{figure}
	\centering
	\includegraphics[width=0.6\textwidth]{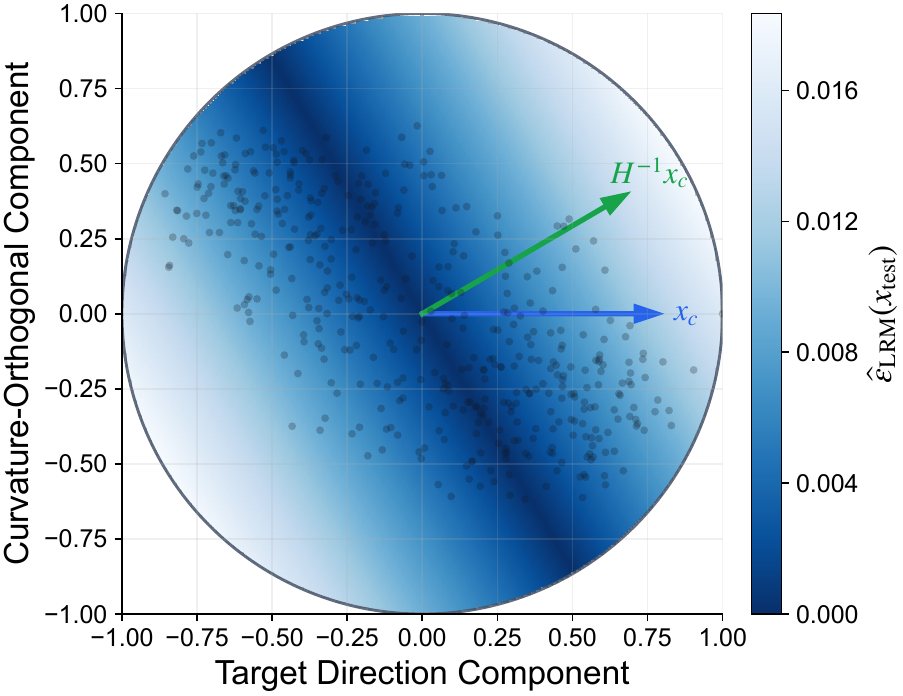}
	\caption{\textbf{Inverse-Hessian Governs Directional Privacy Exposure in LRM.} 
		The heatmap shows the privacy budget $\widehat{\epsilon}_{\mathrm{LRM}}(x_{\mathrm{test}})$ over a 2D slice spanned by the target direction $x_c$ and a curvature-orthogonal direction. Dark points denote real diabetes samples projected onto this slice.
		The exposure concentrates along $H^{-1}x_c$, rather than the raw target direction $x_c$, illustrating that the privacy budget is controlled by the residual magnitude and the targeted inverse-Hessian.}
	\label{fig: vis_lrm_budget}
\end{figure}

Figure~\ref{fig: vis_lrm_budget} visualizes the directional structure of the privacy-budget surrogate on the diabetes dataset.
The background heatmap plots $\widehat{\epsilon}_{\mathrm{LRM}}(x_{\mathrm{test}})$ over a two-dimensional slice spanned by the target direction and a curvature-orthogonal direction, while the dark points show the projected real samples.
The heatmap is symmetric because the surrogate depends on the magnitude $\left|x_{\mathrm{test}}^\top H^{-1}x_c\right|$, so projected diabetes samples exhibit an approximately two-sided empirical distribution.
Furthermore, the exposure concentrates along the inverse-Hessian direction $H^{-1}x_c$, rather than the raw target direction $x_c$, illustrating the role of local Hessian geometry in shaping privacy leakage.

\paragraph{Dual-Branch Perturbation}
The dual-branch fine-tuning construction admits an exact one-step closed form. Starting from the white-box model $\hat{\beta}_D$, we define:
\begin{equation}
	\beta_{D^+} = \hat{\beta}_D-\eta x_c r_c,
\end{equation}
\begin{equation}
	\beta_{D^-} = \hat{\beta}_D+\eta x_c r_c,
\end{equation}
which corresponds to Eq.~\ref{eq: learning_branch} and Eq.~\ref{eq: unlearning_branch}.
The induced behavioral gap at $x_{\mathrm{test}}$ is:
\begin{equation}
	CS_{\mathrm{LRM}}(z_c,x_{\mathrm{test}})
	\triangleq \left| x_{\mathrm{test}}^\top(\beta_{D^+}-\beta_{D^-}) \right|
	= 2\eta \cdot \left| x_{\mathrm{test}}^\top x_c \right| \cdot |r_c|.
\end{equation}

\begin{figure}
	\centering
	\includegraphics[width=\textwidth]{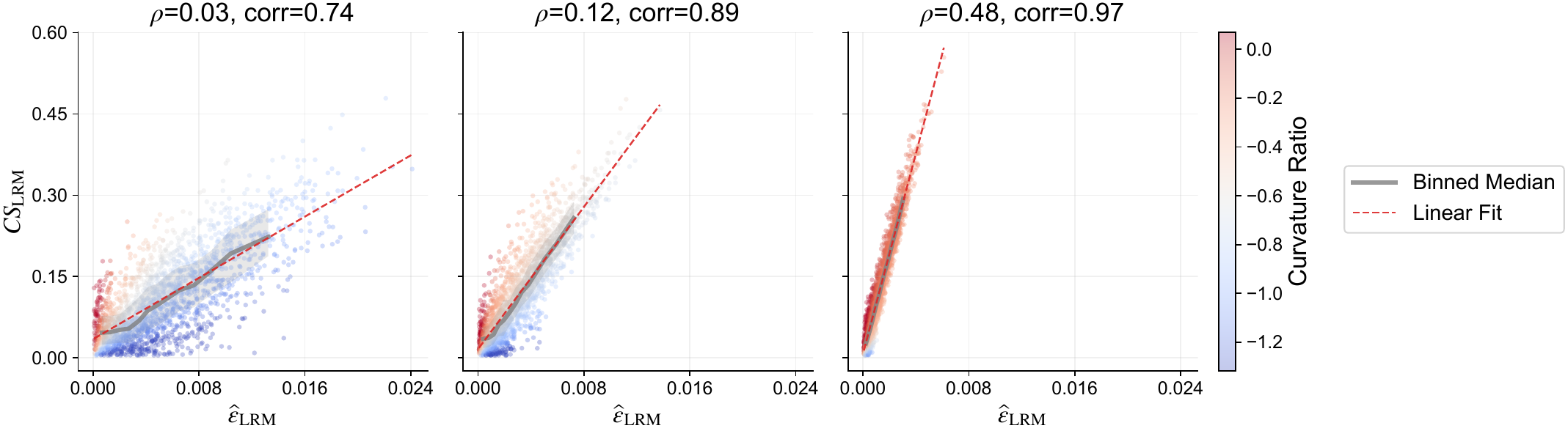}
	\caption{\textbf{Conditional Sensitivity and Privacy Budget Surrogate in LRM.} 
		It reports the relationship between dual-branch behavioral gap $CS_{\mathrm{LRM}}$ and the privacy-budget surrogate $\widehat{\epsilon}_{\mathrm{LRM}}$ under different regularization strength $\rho$.
		Each point corresponds to a target-test direction pair from the diabetes dataset.
		As $\rho$ increases, the correlation becomes stronger, indicating that stronger ridge regularization stabilizes Hessian geometry and makes $CS_{\mathrm{LRM}}$ a closer first-order surrogate for $\widehat{\epsilon}_{\mathrm{LRM}}$.}
	\label{fig: vis_lrm_csbudget}
\end{figure}

Figure~\ref{fig: vis_lrm_csbudget} empirically visualizes this first-order coupling on the diabetes dataset. Across different ridge strengths $\rho$, the dual-branch gap $CS_{\mathrm{LRM}}$ remains positively correlated with the privacy-budget surrogate $\widehat{\epsilon}_{\mathrm{LRM}}$. The coupling becomes tighter as $\rho$ increases, consistent with the fact that stronger regularization stabilizes the Hessian geometry and reduces the discrepancy between $x_c$ and $\mathbf{H}^{-1}x_c$.

\paragraph{First-Order Coupling}
Comparing the gap with the privacy budget gives:
\begin{equation}
	\frac{CS_{\mathrm{LRM}}(z_c,x_{\mathrm{test}})}
	{\widehat{\epsilon}_{\mathrm{LRM}}(x_{\mathrm{test}})}
	\propto
	2n\eta\sigma^2
	\frac{|x_{\mathrm{test}}^\top x_c|}
	{|x_{\mathrm{test}}^\top\mathbf{H}^{-1}x_c|}.
\end{equation}

When the detection direction lies in the target neighborhood, i.e., $x_{\mathrm{test}}\approx x_c$, this ratio is controlled by the Rayleigh quotient of Hessian in the target direction. Therefore, in linear models, the dual-branch behavioral gap and the counterfactual privacy budget are first-order equivalent up to the local curvature induced by the Hessian.

\paragraph{Limitation}
This closed-form derivation also clarifies a limitation of purely residual-based linear detection. If a linear model exactly interpolates the target sample and $r_c=0$, then a single gradient step on $\ell_c$ has zero first-order effect. In practice, this degeneracy can be avoided by evaluating a semantic neighborhood of the target, using nonzero perturbation residuals, or moving to richer observation spaces. The diffusion instantiation below follows this neighborhood-based strategy.

\subsection{Conditional Diffusion Models}
We now instantiate the same framework in conditional diffusion models (CDM), including denoising and latent diffusion families~\cite{ho2020denoising, rombach2022high}. It follows the dual-branch structure: the target concept is represented by a neighborhood of semantically aligned images and prompts, and the model is perturbed through learning and unlearning branches under the score-matching objective. This setting is closely related to concept editing and erasure methods that modify diffusion behavior around specified visual or textual concepts~\cite{gandikota2024unifiedconcepteditingdiffusion, nguyen2025sumasubspacemappingapproach}.
The efficiency of this dual-branch procedure has been validated in our prior work~\cite{Man_Wei_Chen_2026}.

\paragraph{Model Setup}
Let $z_0$ denote an image or latent representation associated with the target concept $z_c$, and let $\mathbf{p}\in\mathcal{P}$ be the condition such as a text prompt or image. A diffusion model~\cite{ho2020denoising} corrupts $z_0$ through a Gaussian forward process $z_t = \alpha_t z_0+\sigma_t \xi$, where $\xi\sim\mathcal{N}(0,I)$, $t$ denotes the timestep, and $\alpha_t,\sigma_t$ are the noise schedule coefficients. 

Let $\xi_\theta(z_t,t,\mathbf{p})$ be the denoising network that predicts the injected noise. The conditional score-matching loss for the target concept can be represented as:
\begin{equation}
	\mathcal{L}_{\mathrm{CDM}}(z_c;\theta)
	= \mathbb{E}_{z_0\in U(z_c),\xi,t,\mathbf{p}\sim\mathcal{P}_c}
	\left[ w_t
		\left\| \xi-\xi_\theta(\alpha_t z_0+\sigma_t\xi,t,\mathbf{p}) \right\|_2^2
	\right],
\end{equation}
where $U(z_c)$ is the semantic neighborhood of the target and $w_t$ denotes the weight. This loss is the diffusion analogue of the single-sample loss $\mathcal{L}(z_c;\theta)$ in the unified theory in Section~\ref{sec: unified-2}; related subject-driven customization methods also rely on small target neighborhoods to bind prompts to visual concepts~\cite{ruiz2023dreambooth, kumari2023multi}.

\paragraph{Dual-Branch Perturbation}
Given the pretrained parameter state $\theta_D$, the learning and unlearning branch perturbation in Eq.~\ref{eq: learning_branch} and Eq.~\ref{eq: unlearning_branch} is defined by:
\begin{equation}
	\theta_{D^+} = \theta_D-\eta\nabla_\theta \mathcal{L}_{\mathrm{CDM}}(z_c;\theta_D),
\end{equation}
\begin{equation}
	\theta_{D^-} = \theta_D+\eta\nabla_\theta \mathcal{L}_{\mathrm{CDM}}(z_c;\theta_D).
\end{equation}
Therefore, the parameter displacement of two branch parameters can be written as:
\begin{equation}
	\Delta\theta = \theta_{D^+}-\theta_{D^-} = -2\eta\nabla_\theta \mathcal{L}_{\mathrm{CDM}}(z_c;\theta_D).
\end{equation}

By the same influence-function argument as Eq.~\ref{eq: final_param_delta}, this displacement is coupled to the counterfactual drift $\theta_D-\theta_{D'}$ through the local curvature of the score-matching objective.

\paragraph{Observable Representation}

Since generated contents are high-dimensional and stochastic, the observable mechanism should be evaluated in a stable representation space. Let $\Phi$ denote a query function, such as an image encoder, text-image embedding model, or any domain-specific feature extractor. 
For a condition $\mathbf{p}$ and dataset $D$ corresponding to parameter $\theta_{D}$, we define:
\begin{equation}
	M(D,\mathbf{p}) \triangleq \Phi \left[ G_{D}(\mathbf{p}) \right].
\end{equation}
where $G$ denotes the generative sampler. The diffusion conditional sensitivity is then:
\begin{equation}	
	CS_{\mathrm{CDM}}(z_c,\mathbf{p})
	= \left\| M(D^+,\mathbf{p})- M(D^-,\mathbf{p}) \right\|_{\mathcal{F}}
	= \left\| \Phi \left[G_{D^+}(\mathbf{p}) \right] - \Phi \left[G_{D^-}(\mathbf{p}) \right] \right\|_{\mathcal{F}}.
\end{equation}
Its equivalent cosine form is:
\begin{equation}
	CS_{\mathrm{CDM, cos}}(z_c,\mathbf{p})
	= 1- \frac{ \Phi \left[G_{D^+}(\mathbf{p}) \right]^\top \cdot \Phi \left[G_{D^-}(\mathbf{p}) \right]
	}{ \left\|\Phi \left[G_{D^+}(\mathbf{p}) \right] \right\|_2 \cdot \left\|\Phi \left[G_{D^-}(\mathbf{p}) \right] \right\|_2 },
\end{equation}
which is mostly used in the CLIP image embeddings~\cite{radford2021learningtransferablevisualmodels}. It matches the practical dual-branch perturbation intuition: infringed concepts produce a larger branch divergence under aligned conditions, while non-infringed or generic concepts produce only minor changes.

Figure~\ref{fig: vis_cdm_sens} in our prior work~\cite{Man_Wei_Chen_2026} empirically illustrates this pattern across face, architecture, and artistic-painting categories in Stable Diffusion Model~\cite{rombach2022high}, indicating larger representation-space divergence between the learning and unlearning branches under aligned prompts.

\begin{figure}
	\centering
	\includegraphics[width=\textwidth]{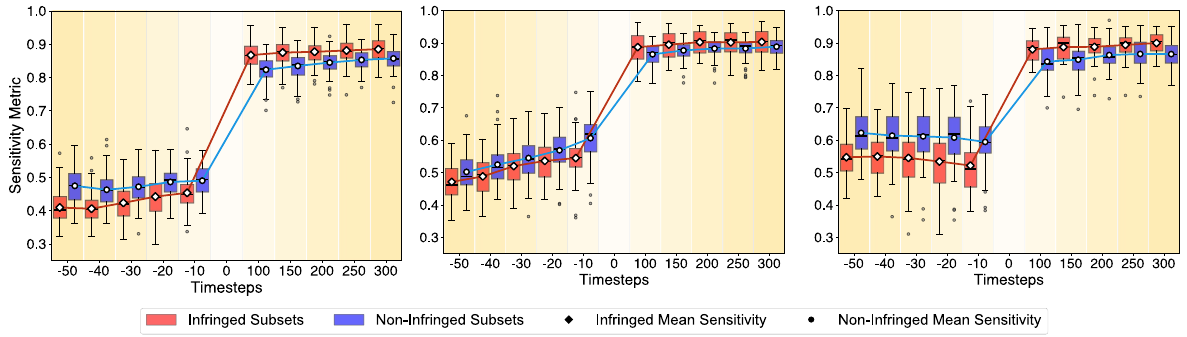}
	\caption{\textbf{Conditional Sensitivity (CLIP Image Encoder) across Categories in the Copyright Infringement Detection of Stable Diffusion.} Three categories are separately (left) human face, (middle) architecture, and (right) arts painting. Infringed samples are more sensitive to the model outputs behavior.}
	\label{fig: vis_cdm_sens}
\end{figure}

\paragraph{Source of the Score-Matching Signal}
The score-matching formulation also identifies the source of the signal. Let $ r_t(z_0,\mathbf{p};\theta) = \xi_\theta(\alpha_t z_0+\sigma_t\xi,t,\mathbf{p})-\xi
$ be the denoising residual. Then the target gradient can be written schematically as:
\begin{equation}
	\nabla_\theta \mathcal{L}_{\mathrm{CDM}}(z_c;\theta)
	= \mathbb{E}_{z_0,\xi,t,\mathbf{p}}
	\left[ 2w_t \cdot \left(\nabla_\theta \xi_\theta(\alpha_t z_0+\sigma_t\xi,t,\mathbf{p})\right)^\top
		\cdot r_t(z_0,\mathbf{p};\theta) \right].
\end{equation}

The gap measures how strongly the model's denoising field around the target neighborhood changes under opposite local updates. 
If the target concept induces strong model-specific dependence, the denoising field around $U(z_c)$ may become locally specialized, so opposite updates can induce a larger representation gap.
If the concept is absent or only weakly represented through generalization, the same perturbation is absorbed by the model's broader denoising prior, resulting in a smaller conditional sensitivity.

\paragraph{Calibrated Detection Statistic}
As diffusion fine-tuning can change global generation behavior even for conditions unrelated to the target, the calibrated score from the previous section is essential in this modality: aligned conditions from $\mathcal{P}_c$ estimate target-specific sensitivity, while orthogonal conditions from $\mathcal{P}_{\perp}$ estimate background fine-tuning drift. The difference between these two quantities yields a more stable post-hoc copyright detection statistic.

\begin{figure}
	\centering
	\includegraphics[width=\textwidth]{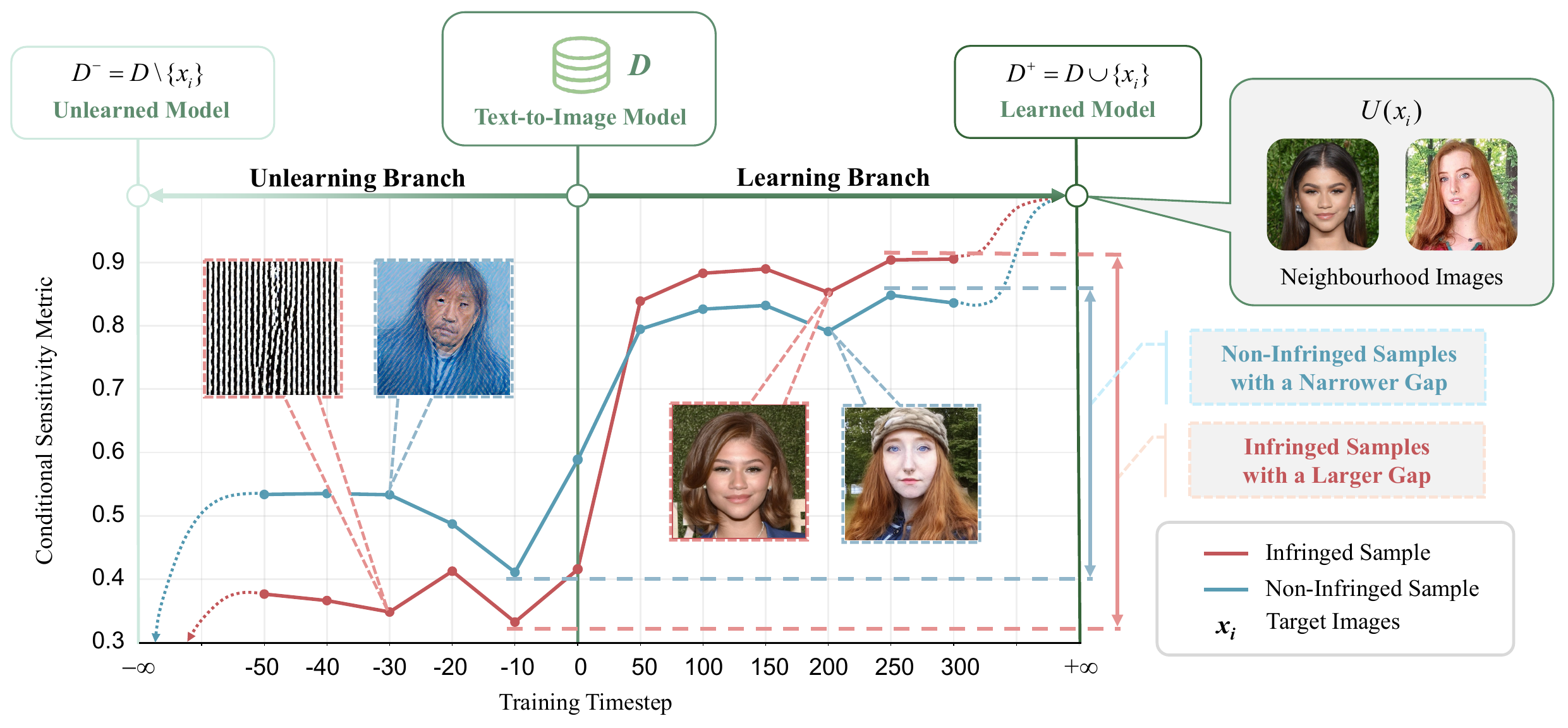}
	\caption{\textbf{Dual-Branch Perturbation in Conditional Diffusion Models.} Given a target neighborhood $U(z_c)$ and aligned conditions $\mathcal{P}_c$, the pretrained sampler $G$ is locally updated into two branches: the learning branch $G_{D^+}$ and the unlearning branch $G_{D^-}$. Conditional sensitivity measures the representation-space gap (e.g., CLIP image encoder) between two branch outputs.}
	\label{fig: vis_cdm}
\end{figure}

\paragraph{Visualization}
Figure~\ref{fig: vis_cdm} in our prior work~\cite{Man_Wei_Chen_2026} illustrates how the conditional diffusion instantiation operationalizes the dual-branch perturbation. Starting from a target concept neighborhood $U(z_c)$ and aligned conditions $\mathcal{P}_c$, the pretrained sampler is locally updated in two opposite directions: the learning branch decreases the target score-matching loss, while the unlearning branch increases it. 
The resulting samplers $G_{D^+}$ and $G_{D^-}$ are then evaluated under matched conditions, and their outputs are projected into a stable representation space through $\Phi$. A large representation gap indicates that the target neighborhood induces a strong condition-specific behavioral response, whereas the orthogonal calibration in Section~\ref{sec: unified-5} diminishes global drift unrelated to the target.

\subsection{Language Models}
\paragraph{Model Setup}
For autoregressive language models (LM), the mechanism $M$ maps a condition, which mostly denotes a prompt $\mathbf{p}=(w_1,\dots,w_m)$, to a conditional distribution over token sequences. Modern LMs are commonly built on transformer architectures~\cite{NIPS2017_3f5ee243}, with recent open foundation model families such as Qwen providing representative evaluation backbones~\cite{qwen2, qwen2.5}. Given parameters $\theta$, the probability of a continuation $y=(y_1,\dots,y_T)$ is factorized as:
\begin{equation}
	q_\theta(y|\mathbf{p}) = \prod_{t=1}^{T} q_\theta(y_t|\mathbf{p},y_{<t}).
\end{equation}
The target copyrighted data point $z_c$ may be a passage, code fragment, or stylistic expression; the modality-specific loss is the negative log-likelihood:
\begin{equation}
	\mathcal{L}_{\mathrm{LM}}(z_c;\theta) = -\sum_{t=1}^{T} \log q_\theta(y_t^c|\mathbf{p}_c,y^c_{<t}),
\end{equation}
where $(\mathbf{p}_c,y^c)$ denotes a prompt-continuation pair associated with the target. 

\paragraph{Dual-Branch Perturbation}
For language models, the dual-branch perturbation is instantiated by applying two opposite local updates with respect to the target negative log-likelihood. 
The learning branch performs a gradient descent step on the target sequence, and the unlearning branch performs the corresponding ascent step:
\begin{equation}
	\theta_{D^+} = \theta_D-\eta\nabla_\theta \mathcal{L}_{\mathrm{LM}}(z_c;\theta_D),
\end{equation}
\begin{equation}
	\theta_{D^-} = \theta_D+\eta\nabla_\theta \mathcal{L}_{\mathrm{LM}}(z_c;\theta_D).
\end{equation}
where $\eta>0$ is a perturbation step size. 

The purpose of this construction is to probe the local sensitivity of the model around $z_c$. 
If the target passage is strongly encoded in the model, then two updates in opposite directions should produce a measurable change in the conditional next-token distributions along the target prefix. 
If the passage is only weakly represented, or if the prompt activates general linguistic knowledge rather than target-specific memorization, the two branches should remain closer in decoding behavior.

\begin{figure}
	\centering
	\includegraphics[width=\textwidth]{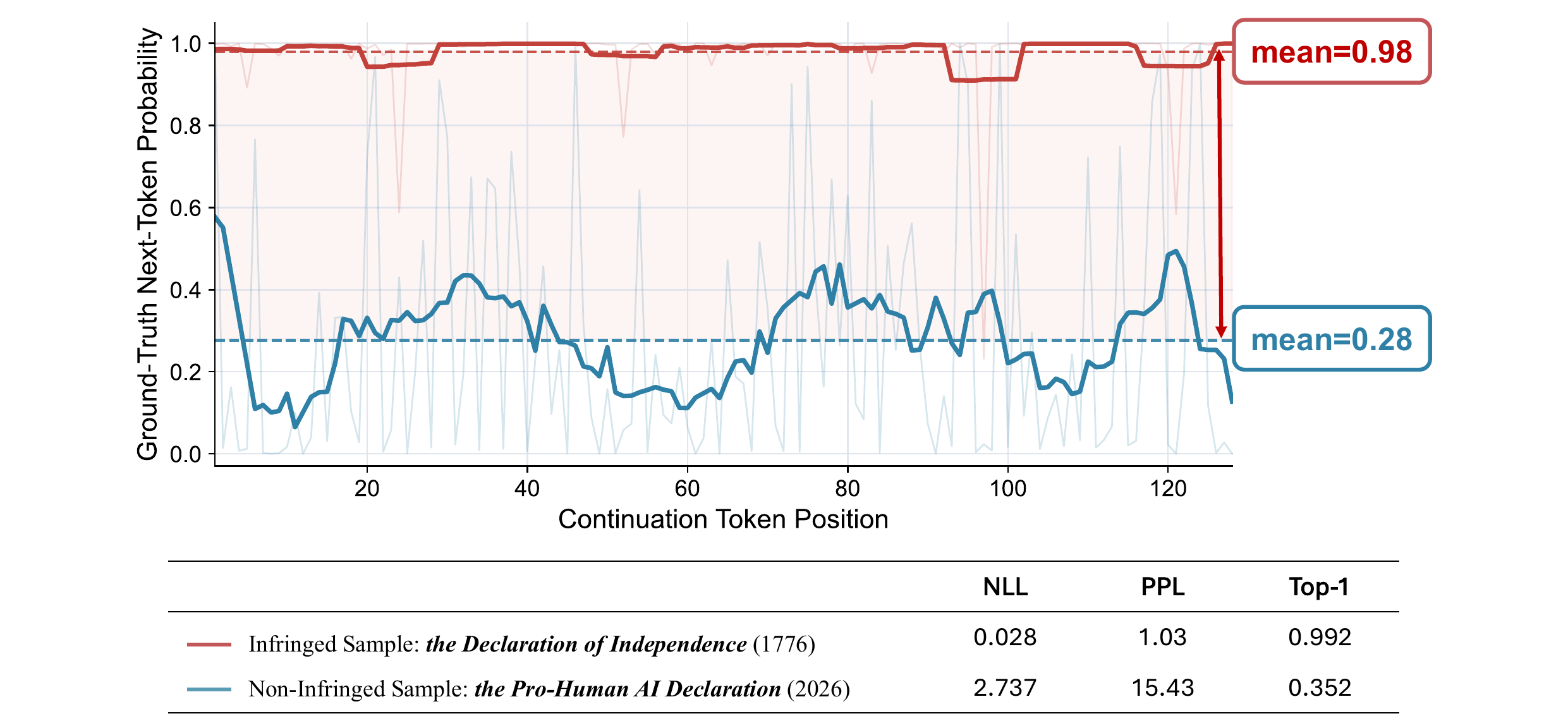}
	\caption{\textbf{Prefix-Conditioned Token Concentration in LM.} Given a fixed prefix prompt, Qwen2.5-7B assigns substantially higher ground-truth next-token probability to the infringed passage than to a non-infringed control passage. Dashed lines denote mean target-token probability over the continuation. The large confidence gap indicates that the infringed passage induces a near-deterministic continuation pattern, consistent with extractable memorization.}
	\label{fig: vis_lm}
\end{figure}

\paragraph{Observable Representation}
Unlike vision models, language models expose memorization through changes in token probabilities and entropy. For a probe prompt $\mathbf{p}$ aligned with the target, we define the next-token entropy at step $t$ as:
\begin{equation}
	H_\theta(t|\mathbf{p}) = -\sum_{v\in\mathcal{V}} q_\theta(v|\mathbf{p},y_{<t}) \log q_\theta(v|\mathbf{p},y_{<t}),
\end{equation}
where $\mathcal{V}$ is the vocabulary. A memorized continuation typically corresponds to an unusually concentrated token distribution along the target sequence. Therefore, the language-model conditional sensitivity can be measured by the divergence between token distributions in the two branches:
\begin{equation}
	CS_{\mathrm{LM}}(z_c,\mathbf{p}) =
	\frac{1}{T} \sum_{t=1}^{T} D_{\mathrm{KL}}
	\left[ q_{\theta_{D^+}}(\cdot|\mathbf{p},y_{<t}) \;\middle\|\; q_{\theta_{D^-}}(\cdot|\mathbf{p},y_{<t}) \right].
\end{equation}
An entropy-based alternative is:
\begin{equation}
	CS_{\mathrm{Ent}}(z_c,\mathbf{p}) =
	\frac{1}{T} \sum_{t=1}^{T} \left| H_{\theta_{D^+}}(t|\mathbf{p}) - H_{\theta_{D^-}}(t|\mathbf{p}) \right|.
\end{equation}

The KL form measures direct distributional disagreement, while the entropy form measures how much the certainty of the model changes under learning versus unlearning. If a target passage is memorized, the unlearning branch should disrupt the low-entropy continuation pattern more strongly than it would for a merely generic or unseen text. Hence a large token-distribution or entropy gap provides evidence of target-specific memorization or copyright infringement.

\paragraph{Calibrated Detection Statistic}
The same calibration principle applies to language models. Prompts in $\mathcal{P}_c$ should be semantically aligned with the target passage, while orthogonal prompts in $\mathcal{P}_{\perp}$ should be unrelated in topic, style, or named entities. The calibrated language score subtracts global decoding instability from target-specific token sensitivity.

\paragraph{Visualization} 
Figure~\ref{fig: vis_lm} visualizes the language-model instantiation through prefix-conditioned token concentration. Given an aligned prefix prompt from a target passage, the model is evaluated on the ground-truth continuation, and the probability assigned to each correct next token is recorded along the continuation positions. 

An infringed or memorized passage induces a highly concentrated decoding trajectory: the model assigns near-deterministic probability to most ground-truth next tokens, producing a low-entropy and low-perplexity continuation profile. By contrast, a non-infringed control passage produces a much less concentrated trajectory, with frequent probability drops and a lower average target-token probability. The vertical gap between the two mean probability levels is the language-model analogue of the representation-space gap used in diffusion models: it reflects target-specific conditional dependence under aligned prompts, while orthogonal prompts can be used to subtract background linguistic predictability.

\subsection{Multimodal Models}
For vision-language and multimodal generative models, a protected target may be encoded not only within a single modality, but also in the association between modalities. 
For example, a textual condition may activate a protected visual identity, a building name may retrieve a specific architectural appearance, or an artist identifier may induce a distinctive visual style. 
In such cases, output similarity alone may be insufficient for detecting infringement, because the relevant signal can appear as an internal binding between a condition and a protected visual, textual, or semantic concept. We therefore instantiate conditional sensitivity in a cross-modal representation space.

\paragraph{Model Setup}
Let $M$ denote a multimodal model that takes a condition $\mathbf{p}$ and, when applicable, an input or generated representation $z$. 
The model may produce outputs in one or more modalities, while internally maintaining joint representations that connect textual, visual, latent, or memory tokens. 
We denote such a cross-modal representation by:
\begin{equation}
	\Psi_D(\mathbf{p},z) \in \mathcal{H}_{\mathrm{cross}},
\end{equation}
where $\mathcal{H}_{\mathrm{cross}}$ is a shared representation space for measuring modality-level associations. 
Depending on the architecture, $\Psi_D$ corresponds to a joint image-text embedding, a multimodal hidden state, or a latent conditioning vector.

To construct the two perturbation branches, the target concept must be paired with a modality-specific cross-modal objective. We write this objective abstractly as:
\begin{equation}
	\mathcal{L}_{\mathrm{cross}}(z_c;\theta)
	=
	\mathbb{E}_{(\mathbf{p},z)\sim U(z_c)}
	\left[
		\ell_{\mathrm{cross}}(\mathbf{p},z;\theta)
	\right],
\end{equation}
where $U(z_c)$ denotes the target neighborhood containing aligned text, image, audio, latent, or other modality pairs, and $\ell_{\mathrm{cross}}$ may be instantiated as a contrastive image-text loss, captioning negative log-likelihood, image-text matching loss, multimodal reconstruction loss, or another architecture-specific binding objective. This loss plays the same role as $\mathcal{L}_{\mathrm{CDM}}$ in diffusion models and $\mathcal{L}_{\mathrm{LM}}$ in language models.

\paragraph{Dual-Branch Perturbation}
As in the previous modalities, we construct two local branches around the pretrained parameter point $\theta_D$ by taking opposite steps on the cross-modal target loss:
\begin{equation}
	\theta_{D^+} = \theta_D-\eta\nabla_\theta \mathcal{L}_{\mathrm{cross}}(z_c;\theta_D),
\end{equation}
\begin{equation}
	\theta_{D^-} = \theta_D+\eta\nabla_\theta \mathcal{L}_{\mathrm{cross}}(z_c;\theta_D),
\end{equation}
where the learning branch strengthens the target association and the unlearning branch weakens it. Their displacement is therefore:
\begin{equation}
	\Delta\theta = \theta_{D^+}-\theta_{D^-}
	= -2\eta\nabla_\theta \mathcal{L}_{\mathrm{cross}}(z_c;\theta_D).
\end{equation}
Given these two perturbed branches, we define cross-modal conditional sensitivity as:
\begin{equation}
	CS_{\mathrm{cross}}(z_c,\mathbf{p})
	= \left\| \Psi_{D^+}(\mathbf{p},z_c) - \Psi_{D^-}(\mathbf{p},z_c) \right\|_{\mathcal{H}_{\mathrm{cross}}}.
\end{equation}
This statistic measures how strongly the learned correspondence between the condition and the protected concept changes under opposite perturbations. 

If the model has memorized a specific cross-modal association, such as a character name with a distinctive face, a building name with a protected architectural image, or an artist identifier with a protected style, then the two branches should produce a larger representation gap around aligned prompts. 
If the association is weak, generic, or produced mainly by broad semantic generalization, the corresponding gap should remain smaller.

\paragraph{Attention as a Special Case}

For architectures with explicit cross-attention~\cite{NIPS2017_3f5ee243}, the general representation map $\Psi_D$ can be instantiated by attention tensors. Let $a_D^{(\ell,h)}(\mathbf{p},z)$ denote the cross-attention matrix at layer $\ell$ and head $h$, where tokens from one modality attend to tokens from another modality. 
Collecting these matrices over selected layers and heads gives:
\begin{equation}
	\mathcal{A}_D(\mathbf{p},z) = \{a_D^{(\ell,h)}(\mathbf{p},z)\}_{\ell,h}.
\end{equation}
The attention-based sensitivity score is then written as:
\begin{equation}
	CS_{\mathrm{attn}}(z_c,\mathbf{p}) = \left\| \mathcal{A}_{D^+}(\mathbf{p},z_c) - \mathcal{A}_{D^-}(\mathbf{p},z_c) \right\|_{\mathcal{F}}.
\end{equation}

Thus, attention sensitivity is an architectural realization of cross-modal conditional sensitivity, rather than the only possible definition. 
This distinction is important for models that use joint-token transformers, contrastive encoders, latent conditioning modules, or decoder-only multimodal architectures, where the relevant binding may not be represented by an explicit cross-attention matrix.

\paragraph{Joint Output--Representation Score}
Cross-modal sensitivity can also be combined with output-space sensitivity. 
Let $\Phi_{\mathrm{out}}$ be an encoder for the generated or predicted output, and let $\Psi_D$ be the selected internal cross-modal representation. 
A joint score can be written as
\begin{align}
	CS_{\mathrm{joint}}(z_c,\mathbf{p})
	= &\alpha \left\| \Phi_{\mathrm{out}} \left[ M(D^+,\mathbf{p}) \right] - \Phi_{\mathrm{out}} \left[ M(D^-,\mathbf{p}) \right] \right\|_{\mathcal{F}} \\
	&+ (1-\alpha) \left\| \Psi_{D^+}(\mathbf{p},z_c) - \Psi_{D^-}(\mathbf{p},z_c) \right\|_{\mathcal{H}_{\mathrm{cross}}},
\end{align}
where $\alpha\in[0,1]$ controls the balance between observable output divergence and internal cross-modal representation divergence. 
The output term captures visible behavioral changes, while the representation term captures changes in the internal association between the condition and the protected concept. 

This formulation is useful when generated samples are noisy, stochastic, or visually ambiguous, but the model's internal cross-modal representation still reveals stable concept-specific dependence.

\paragraph{Calibrated Detection Statistic}
The same orthogonal calibration principle applies in the cross-modal setting.  Aligned prompts in $\mathcal{P}_c$ should activate the suspected protected association, while orthogonal prompts in $\mathcal{P}_{\perp}$ should preserve the general modality structure without referring to the protected concept. The calibrated score subtracts background multimodal instability from target-specific cross-modal sensitivity, reducing the risk that generic prompt sensitivity or sampling variance is mistaken for memorization.

\newpage
\section{Conclusion}
\label{sec: conclusion}

This paper presents \textbf{Dual-Branch Conditional Sensitivity (DCS)}, a unified framework for post-hoc copyright infringement detection in generative and multimodal models. Instead of treating infringement as a static similarity judgment between an output and a copyrighted reference, we formulate it as a counterfactual conditional distribution shift. Under this view, evidence of infringement arises when the inclusion or exclusion of a protected target produces a measurable change in the model's behavior under aligned conditions.

The proposed theory connects copyright memorization with conditional differential privacy. Conditional publicity characterizes the case in which a target concept produces a substantial privacy violation, while non-infringement corresponds to approximate invariance with respect to the target. This formulation clarifies why distinctive long-tail works can be more auditable than generic concepts, and why output-only tests must be interpreted under explicit assumptions about target salience, observable model behavior, and access to controlled conditioning spaces.

To make the counterfactual criterion operational, we introduce \textbf{conditional sensitivity} based on a dual-branch perturbation procedure. The learning branch and unlearning branch approximate opposite local movements around the target concept, and their observable divergence serves as a surrogate for the unavailable retraining-based counterfactual. The first-order analysis shows that this divergence is controlled by the counterfactual privacy-budget surrogate, the training-set scale, the perturbation step size, and the local Hessian curvature. Furthermore, the calibrated detection statistic subtracts sensitivity under orthogonal conditions, reducing false evidence caused by global parameter shift or sampling instability.

The modality-specific instantiations demonstrate that the same principle can be applied beyond a single model family. Linear regression provides a closed-form sanity check in which residuals and inverse-Hessian geometry govern exposure. Conditional diffusion models express the signal through image or latent representation divergence. Language models expose it through token-distribution and entropy shifts. Multimodal models extend the criterion to cross-modal representation and attention sensitivity, capturing protected associations that may not be visible from raw outputs alone.

The framework also has clear limitations. It does not claim to decide legal infringement by itself, nor does it eliminate the need for domain-specific evidence about protectable expression, authorization, and substantial similarity. 
Its statistical reliability depends on the distinctiveness of the target, the quality of aligned and orthogonal conditions, the stability of the perturbation procedure, and the observability of the chosen representation.

\newpage
\bibliographystyle{unsrt}
\bibliography{references.bib}

\end{document}